%% file: main.tex
\documentclass{article}
\PassOptionsToPackage{numbers}{natbib}
\usepackage[preprint]{neurips_2026} 

\usepackage[utf8]{inputenc}
\usepackage[T1]{fontenc}
\usepackage{hyperref}
\usepackage{url}
\usepackage{booktabs}
\usepackage{amsfonts}
\usepackage{amssymb}
\usepackage{amsmath}
\usepackage{algorithm}
\usepackage{algpseudocode}
\usepackage{nicefrac}
\usepackage{microtype}
\usepackage{enumitem}
\usepackage{float}
\usepackage{multirow}
\usepackage{pgfplots}
\pgfplotsset{compat=1.18}
\usepackage{amsthm}
\newtheorem{theorem}{Theorem}

\usepackage[dvipsnames]{xcolor}
\usepackage{cleveref}
\usepackage{makecell}
\usepackage{subcaption}
\usepackage{graphicx}
\usepackage{longtable}
\usepackage{wrapfig}

\usepackage{dsfont}

\newcommand{\stam}[1]{}

\usepackage{mathtools}

\newcommand{\bx}{\mathbf{x}}

\newcommand{\bu}{\mathbf{u}}
\newcommand{\bv}{\mathbf{v}}

\newcommand{\bz}{\mathbf{z}}

\newcommand{\br}{\mathbf{r}}
\newcommand{\bm}{\mathbf{m}}

\newcommand{\bp}{\mathbf{p}}
\newcommand{\bh}{\mathbf{h}}
\newcommand{\by}{\mathbf{y}}

\newcommand{\bq}{\mathbf{q}}

\newcommand{\btheta}{{\boldsymbol{\theta}}}

\newcommand{\reals}{{\mathbb R}}

\newcommand{\inner}[1]{\langle #1 \rangle}

\definecolor{Green4}{RGB}{0,139,0}                
\definecolor{myblue}{HTML}{1F77B4}
\definecolor{mygreen}{HTML}{2CA02C}
\definecolor{myred}{HTML}{D62728}
\definecolor{mypurple}{HTML}{9467BD}
\newcommand{\sgreen}[1]{\textcolor{Green4}{+#1\%}}

\newcommand{\RR}{\mathbb{R}}

\title{STARFISH: faST Accuracy Recovery in pruned networks From Internal State Healing}
\author{%
  Shir Maon \thanks{Weizmann Institute of Science, \texttt{shir.maon@weizmann.ac.il}} \\
  \And
  Odelia Melamed \thanks{Weizmann Institute of Science, \texttt{odelia.melamed@weizmann.ac.il}}\\
  \And
  Adi Shamir \thanks{Weizmann Institute of Science, \texttt{adi.shamir@weizmann.ac.il}}\\
}

\begin{document}

\maketitle

\begin{abstract}
Pruning is a process designed to reduce the number of weights in a large neural network. This can substantially speed up inference but might cause a considerable reduction in the model's accuracy, and thus it is usually followed by a healing process that regains some of the lost accuracy. In this paper, we propose a new healing method, \emph{STARFISH}, that can recover (most of) the accuracy of any pruned network efficiently. The main idea of STARFISH is to optimize the pruned network to align with the original network's internal state representations using a tiny calibration set of unlabeled examples. For the common case of removing $50\%$ of the weights, STARFISH healing improves the recovered accuracy by up to $22\%$ over the state-of-the-art methods on ViT-based networks. Its advantage is even more pronounced under aggressive pruning. For example, after eliminating $75\%$ of the weights in a DeiT-B network for ImageNet, STARFISH uses only $0.4\%$ of the number of training images as a calibration set and recovers $82\%$ of the original dense accuracy, whereas competing recovery techniques reach only $40\%$ of the dense model accuracy. Open-source code is available \href{https://github.com/shrikboi/STARFISH}{here}.

\end{abstract}

\section{Introduction}

Modern neural networks achieve increasingly strong performance, often at the cost of a growing parameter count. This growth makes deployment expensive in terms of memory, latency, and energy consumption. \emph{Pruning} addresses this problem by removing some weights, neurons, channels, attention heads, or other computational units from a pretrained model, producing a smaller and more efficient network while attempting to preserve the predictive performance of the original dense model. 

When performing pruning, the desired \emph{sparsity}, namely the fraction of removed weights, plays a key role. While low sparsity saves only a limited amount of space and computation, high sparsity pruning often causes a substantial drop in performance.
Early pruning and recovery breakthroughs report high accuracy with very high sparsity, removing up to $98.5\%$ of the weights in convolutional neural networks (CNNs; \cite{cnn})~\cite{frankle2019stabilizing}. Yet, modern large-scale pruning and recovery methods struggle to restore accuracy after high-sparsity pruning, and excel in lower-sparsity settings of $50\%$ of weights removed.

Though it has been shown that an optimal pruning might not need recovery \cite{malach2020proving, ramanujan2020s}, many pruning pipelines include a recovery procedure for the remaining weights, which is often the most expensive part of the pipeline. This line of work started in the simplest pruning method followed by retraining or fine-tuning of the remaining weights, often through multiple steps of low-sparsity pruning and healing schedules until the target sparsity is reached ~\cite{han2015learning,zhu2017prune}. Inspired by the Lottery Ticket Hypothesis~\cite{frankle2018lottery}, other iterative pruning variants include rewinding the surviving weights to an early training checkpoint, before retraining the sparse model~\cite{frankle2019stabilizing,renda2020comparing}.

As neural networks and the corresponding datasets grow larger, these procedures require substantial computing resources. In addition, they require access to the original training data and knowledge of its randomness, which are often unrealistic in modern settings, where training details are typically treated as proprietary knowledge, even in open-source models. Thus, a line of work of efficiency-motivated methods, focuses on avoiding full retraining and uses local curvature restoration~\cite{hassibi1993second}, which was later extended to approximate curvature restoration~\cite{meng2024falcon,benbaki2023fast,singh2020woodfisher,yu2022combinatorial,frantar2023sparsegpt}. The latest recovery-focused methods aim to repair the pruned model directly, using a small calibration set and fast local approximations of the pruning damage based on the networks' hidden states. Two such methods are SNOWS~\cite{lucas2024preserving}, which performs layerwise recovery by preserving downstream representations using Hessian approximation, and the very recent CORP~\cite{zhang2026corp} (published in February 2026), which derives a closed-form representation recovery function for pruning in vision transformers.

\begin{figure*}[t]\label{fig:highsparse}
\centering
\begin{subfigure}[t]{0.3\linewidth}
\centering
\begin{tikzpicture}
\begin{axis}[
    title={DeiT3-H (QK+MLP)},
    title style={font=\scriptsize},
    ylabel={Top-1 Accuracy (\%)},
    xlabel={Sparsity Level},
    xmin=0.49, xmax=0.86,
    ymax=87, ymin=0, yticklabel style={font=\scriptsize}, 
    xticklabel style={rotate=45, anchor=east, font=\scriptsize},
    xtick={0.5,0.63,0.69,0.75,0.8,0.85},
    legend to name=commonlegend,
    legend columns=5,
    legend style={draw=none, /tikz/every even column/.append style={column sep=0.5em}},
    ymajorgrids=true,
    grid style=dashed,
    width=\linewidth,
    height=5.5cm,
    thick,
    font=\scriptsize
]
\addplot[color=black, dashed, thick] coordinates {
    (0.49,84.97) (0.86,84.97)
};
\addlegendentry{Dense acc.}

\addplot[color=gray, mark=square, dashed] coordinates {
    (0.5,78.27) (0.63,64.38) (0.69,46.72) (0.75,25.90) (0.8,13.19) (0.85,5.88)
};
\addlegendentry{Pruned acc.}

\addplot[color=Brown, mark=o, thick] coordinates {
    (0.5,82.80) (0.63,79.10) (0.69,75.10) (0.75,67.20)
};
\addlegendentry{CORP}

\addplot[color=Salmon, mark=*, ultra thick] coordinates {
    (0.5,84.06) (0.63,83.35) (0.69,82.79) (0.75,81.78) (0.8,80.34) (0.85,78.24)
};
\addlegendentry{\textbf{STARFISH}}

\end{axis}
\end{tikzpicture}
\label{fig:lizard_deit_h_qkv_mlp}
\end{subfigure}%
\hspace{-3em}
\begin{subfigure}[t]{0.3\linewidth}
\centering
\begin{tikzpicture}
\begin{axis}[
    title={DeiT-B (QK+MLP)},
    title style={font=\scriptsize},
    xlabel={Sparsity Level},
    xmin=0.49, xmax=0.86,
    ymax=87, ymin=0,
    xticklabel style={rotate=45, anchor=east, font=\scriptsize},
    xtick={0.5,0.63,0.69,0.75,0.8,0.85},
    ymajorgrids=true,
    grid style=dashed,
    width=\linewidth,
    height=5.5cm,
    thick,
    yticklabels=\empty,
    font=\scriptsize
]
\addplot[color=black, dashed, thick] coordinates {
    (0.49,81.73) (0.86,81.73)
};

\addplot[color=gray, mark=square, dashed] coordinates {
    (0.5,31.19) (0.63,8.30) (0.69,3.54) (0.75,1.42) (0.8,0.95) (0.85,0.62)
};

\addplot[color=Brown, mark=o, thick] coordinates {
    (0.5,67.00) (0.63,55.90) (0.69,46.60) (0.75,32.70)
};

\addplot[color=Salmon, mark=*, ultra thick] coordinates {
    (0.5,75.77) (0.63,72.29) (0.69,70.12) (0.75,67.03) (0.8,64.03) (0.85,59.70)
};
\end{axis}

\end{tikzpicture}
\label{fig:lizard_deit_b_qkv_mlp}
\end{subfigure}%
\hspace{-3.8em}
\begin{subfigure}[t]{0.3\linewidth}
\centering
\begin{tikzpicture}
\begin{axis}[
    title={DeiT-B (MLP)},
    title style={font=\scriptsize},
    xlabel={Sparsity Level},
    xmin=0.29, xmax=0.81,
    ymax=87, ymin=0,
    yticklabel style={font=\scriptsize},
    yticklabels=\empty,
    xticklabel style={rotate=45, anchor=east, font=\scriptsize},
    xtick={0.3,0.5,0.7,0.75,0.8},
    ymajorgrids=true,
    grid style=dashed,
    width=\linewidth,
    height=5.5cm,
    thick,
    font=\scriptsize
]
\addplot[color=black, dashed, thick] coordinates {
    (0.29,81.73) (0.81,81.73)
};

\addplot[color=gray, mark=square, dashed] coordinates {
    (0.3,71.88) (0.5,54.56) (0.7,14.50) (0.75,7.61) (0.8,4.06)
};

\addplot[color=Brown, mark=o, thick] coordinates {
    (0.3,75.65) (0.5,68.67) (0.7,56.46)
};

\addplot[color=Salmon, mark=*, ultra thick] coordinates {
    (0.3,78.98) (0.5,75.91) (0.7,69.74) (0.75,67.15) (0.8,64.28)
};
\end{axis}
\end{tikzpicture}
\label{fig:lizard_deit_b_mlp}
\end{subfigure}
\hspace{-3.8em}
\begin{subfigure}[t]{0.3\linewidth}
\centering
\begin{tikzpicture}
\begin{axis}[
    title={DeiT-S (MLP)},
    title style={font=\scriptsize},
    xlabel={Sparsity Level},
    xmin=0.29, xmax=0.81,
    ymax=87, ymin=0, yticklabel style={font=\scriptsize},   yticklabels=\empty,
    xticklabel style={rotate=45, anchor=east, font=\scriptsize},
    xtick={0.3,0.5,0.7,0.75,0.8},
    ymajorgrids=true,
    grid style=dashed,
    width=\linewidth,
    height=5.5cm,
    thick,
    font=\scriptsize
]
\addplot[color=black, dashed, thick] coordinates {
    (0.29,79.72) (0.81,79.72)
};

\addplot[color=gray, mark=square, dashed] coordinates {
    (0.3,54.11) (0.5,45.51) (0.7,10.80) (0.75,3.84) (0.8,1.95)
};

\addplot[color=Brown, mark=o, thick] coordinates {
    (0.3,73.88) (0.5,65.85) (0.7,49.98)
};

\addplot[color=Salmon, mark=*, ultra thick] coordinates {
    (0.3,76.06) (0.5,72.54) (0.7,65.16) (0.75,61.68) (0.8,57.40)
};
\end{axis}
\end{tikzpicture}
\label{fig:lizard_deit_s_mlp}
\end{subfigure}%

\centering
\ref*{commonlegend}

\caption{Comparison of the top-1 accuracy results after applying the STARFISH and CORP healing processes on different pruning scopes and sparsity levels for the DeiT-B, DeiT3-H, and DeiT-S architectures trained on ImageNet, both following CORP structured pruning. STARFISH recovery is more successful at all sparsity levels, reaching an even larger advantage as sparsity increases.}
\label{fig:lizard_high_sparsity_corp}
\end{figure*}

In this paper, we present the \textbf{STARFISH} (fa\textbf{ST} \textbf{A}ccuracy \textbf{R}ecovery in pruned
networks \textbf{F}rom \textbf{I}nternal \textbf{S}tate \textbf{H}ealing) method. The starfish is a marine animal with exceptional healing abilities that can regrow multiple arms even when completely cut off. Similarly, our STARFISH method excels at healing the performance even after the network undergoes aggressive pruning, regardless of the pruning method or type. We show that optimizing the directional alignment of the intermediate representations of the pruned network with those of the original network is a good proxy for accuracy restoration. 
Our contribution is built on similar objectives that were previously shown to better capture the models' knowledge in the fields of knowledge distillation, self-supervised and unsupervised learning ~\cite{romero2015fitnetshintsdeepnets,sanh2019distilbert,zagoruyko2017paying,tian2020contrastive,chen2020simple,grill2020bootstrap,ericsson2022self}, now using it as a simple post-pruning recovery mechanism, empirically showing that this objective is especially effective for high-sparsity pruning.

In \Cref{fig:lizard_high_sparsity_corp}, we demonstrate this remarkable improvement for four modern transformer-based image classifiers ~\cite{touvron2021training,touvron2022deit} at various sparsity levels: our STARFISH method (in pink) is much better than the state-of-the-art recovery method CORP (in brown), especially in the most interesting regime of high sparsity pruning, where we denote the sparsity level as the fraction of weights removed. For the sake of completeness, we include the baseline accuracy when no healing is used (in dashed grey).

In more detail, our main contributions are:
\begin{enumerate}
    \item We present STARFISH, an efficient recovery method based on intermediate-representation alignment, which is independent of pruning types, methods, or network architecture.
    Building on previous hidden-state alignment research, we empirically show that optimizing it over the calibration set is increasingly more effective at improving test accuracy than direct output-based optimization as the sparsity level increases.
    \item We show experimentally, on large image classifiers such as ViT~\cite{dosovitskiy2020image} and MobileNet ~\cite{howard2017mobilenets}, that
our method is the first to efficiently restore even $92\%$ of the original accuracy at the very high sparsity level of $0.85$, in which the pruning process almost completely degrades the original accuracy. 
\item For the more modest sparsity of $0.5$ across different ViT-based architectures ~\cite{touvron2021training,touvron2022deit}, we show that our method outperforms the competing methods by a large factor of up to $31.5$ percentage-points improvement in recovering the original network accuracy, reaching up to $99.8\%$ accuracy retention of the dense model accuracy. 
\end{enumerate}

\section{Related work}
Trying to create more efficient machine learning models, the pruning line of work involves the removal of weights from the network while attempting to preserve the original accuracy. While theoretical works demonstrate the existence of optimal pruning that requires no recovery at all \cite{ramanujan2020s, malach2020proving}, much of the pruning literature focuses on optimizing the pruning method and using fine-tuning, or equivalent training-length methods for recovery~\cite{lecun1989optimal, han2015learning, zhu2017prune, zheng2022savit, 
kuznedelev2023cap,
chen2025optimal}. Later works observed that rewinding the remaining weights to an early checkpoint and retraining works better~\cite{frankle2018lottery,renda2020comparing}. Since such recovery procedures can be computationally expensive, another line of work seeks to restore performance without full retraining or rewinding. A classical example is Optimal Brain Surgeon (OBS)~\cite{hassibi1993second}, which updates the remaining weights to recover the lost curvature caused by the pruning. 

Modern methods scale these principles to large models using efficient curvature estimates. WoodFisher~\cite{singh2020woodfisher} accelerates OBS by approximating the inverse Hessian with the empirical Fisher matrix, while CBS~\cite{yu2022combinatorial} improves efficiency further through simultaneous weight removal rather than iterative pruning. These ideas were extended to large language models: SparseGPT~\cite{frantar2023sparsegpt} adapts the OBS pruning and update rule to a local layerwise problem, and in~\cite{kwon2022fast}, the authors propose transformer pruning via Fisher-based mask search, rearrangement, and layerwise weight updates. CHITA~\cite{benbaki2023fast} further exploits the low rank of the Fisher matrix by reformulating pruning as an $\ell_0$ sparse linear regression problem, while FALCON~\cite{meng2024falcon} extends this line of work with a FLOP-cost constraint.

As model sizes increase, even approximating the entire model's geometric properties becomes computationally expensive, motivating a shift to explicitly examining layerwise computations and outputs, commonly referred to as hidden representations, using a small calibration set.
SNOWS~\cite{lucas2024preserving} performs layerwise representation reconstruction using a Hessian approximation to perform a Newton descent optimization step, while CORP~\cite{zhang2026corp} uses a one-shot approximation of an affine transformation between the original and pruned representations in vision transformers.

\paragraph{Representation alignment.}
Representation alignment has been widely studied in knowledge distillation~\cite{hinton2015distilling}, where intermediate teacher representations are used to provide stronger supervision than output matching alone, including hidden-state hints, attention maps, language-model hidden-state alignment, and contrastive feature matching~\cite{romero2015fitnetshintsdeepnets,zagoruyko2017paying,sanh2019distilbert,tian2020contrastive}. 
A related line of work uses representation alignment in self-supervised and unsupervised learning, where models learn useful embeddings by aligning related views, target networks, or even neural network weights without relying on manual labels~\cite{chen2020simple,grill2020bootstrap,ericsson2022self}. 
More broadly, representation similarity measures, such as cosine-based comparisons, have been shown to better capture cross-network similarities~\cite{kornblith2019similarity}.

\section{Method}
\label{sec:method}

\subsection{Problem formulation}
\label{sec:method:problem}

Given a neural network, referred to as the dense model, pruning is the process of sparsifying its parameters in order to reduce memory, computation, or inference cost. 
Let $\mathcal{D}$ denote the data distribution from which we sample the labeled data
$(\bx,y)\in\RR^d\times[K]$, where $K$ is the number of classes. 
Let $f_{\btheta}: \RR^d\to\RR^{K}$ be a pretrained $K$-class classifier with parameters $\btheta$,
for which we are given a binary pruning mask $\bm \in \{0,1\}^{|\btheta|}$ resulting in a pruned model $f_{\btheta \odot \bm}$, where $\odot$ denotes elementwise multiplication. For any classifier
$g:\RR^d\to\RR^{K}$, predictions are given by
$\arg\max_{k\in[K]} g(\bx)_k$, and we define
\[
\mathrm{Acc}_{\mathcal D}(g)
=
\Pr_{(\bx,y)\sim\mathcal D}
\left[
\arg\max_{k\in[K]} g(\bx)_k = y
\right].
\]

As pruning typically degrades performance, the recovery goal is to restore this lost performance without changing the mask. In particular, we seek to obtain parameters $\widetilde\btheta$ such that the recovered pruned model $f_{\widetilde\btheta\odot\bm}$ matches the predictive accuracy of the dense model
\[    \mathrm{Acc}_{\mathcal{D}}(f_{\widetilde\btheta\odot\bm}) \approx \mathrm{Acc}_{\mathcal{D}}(f_{\btheta}).\]

\subsection{The STARFISH algorithm}
We present the STARFISH method, which aims to encourage the pruned model to align with the dense model's intermediate representations across the network, rather than matching only the final output, by a simple optimization on a small calibration set. The procedure consists of two steps. Let $S_{\mathrm{cal}} = \{\bx_1,\ldots,\bx_n\}$ be a small unlabeled calibration set sampled from $\mathcal{D}$ such that for all $i$, $\bx_i \notin S_{\mathrm{train}}$, and $|S_{\mathrm{cal}}| \ll |S_{\mathrm{train}}|$.
First, we run the dense model on $S_{\mathrm{cal}}$ and cache the target intermediate representations. Second, 
we train $f_{\btheta \odot \mathbf{m}}$ to align with those cached representations.

For a formal algorithm description, we let L be the number of blocks in $f_\btheta$. Given an input $\bx \in \reals^d$, for each $\ell\in [L]$, we define $\br_\bx^{\ell}$ as the vectorized output of the dense model $f_{\btheta}$ after block $\ell$ (i.e., taken after all operations that form the block output, such as normalization, nonlinearities, residual additions, and other architecture-specific post-processing). Denote by $f_{\btheta \odot \mathbf{m}}^\ell(\bx)$ the corresponding $\ell$-th representation of the pruned model $f_{\btheta \odot \bm}$. We define the alignment loss as

\begin{equation}
\label{eq:align_cos}
\mathcal{L}_{\mathrm{align}}(\btheta \odot \mathbf{m}, \bx)
=
\frac{1}{L}
\sum_{\ell \in [L]}
\left(
1 -
\frac{
\left\langle
f_{\btheta \odot \mathbf{m}}^\ell(\bx),
\br_\bx^{\ell}
\right\rangle
}{
\left\|f_{\btheta \odot \mathbf{m}}^\ell(\bx) \right\|
\left\|\br_\bx^{\ell}\right\|
}
\right),
\end{equation}

and train the pruned model $f_{\btheta \odot \bm}$ by minimizing the empirical loss over $S_{\mathrm{cal}}$, namely,
\begin{equation*}
L(\btheta) = \frac{1}{n} \sum_{i \in [n]} \mathcal{L}_{\mathrm{align}}\left(\btheta \odot \mathbf{m}, \bx_i\right).
\end{equation*}

We note that the STARFISH recovery method offers two significant advantages over existing methods. First, the number of samples in $S_{\mathrm{cal}}$ is extremely small compared to the train set, and does not need to be drawn from it. 
This is particularly desirable in modern ML settings, where the trained model might be open source, while the original training set is not necessarily accessible.
Second, while different recovery methods are often dependent and even coupled with a specific pruning method, STARFISH adjusts the remaining weights after the damage has occurred, independently of the pruning. This allows the STARFISH recovery method to be easily applicable to any pruning method, sparsity pattern, pruning scope, and architecture. 

As representation alignment has been found useful in other research fields, such as knowledge distillation and unsupervised learning, for aligning networks' internal computations, the STARFISH recovery method utilizes it to recover highly pruned networks.
In particular, in \Cref{sec:method:theory}, we show that the STARFISH method, leveraging representation alignment, achieves better performance than output-based objectives for recovering severely pruned networks.

\begin{figure}[ht]
  \centering \includegraphics[width=1\textwidth]{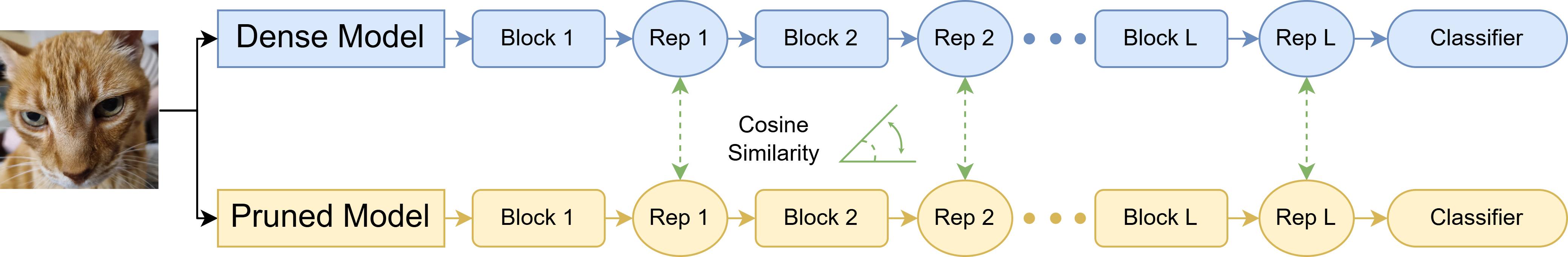} 
  \caption{Visualization of the representation computation and alignment of STARFISH, where the cosine similarity is computed using the hidden representation taken after each corresponding computation block.}
  \label{fig:algorithm}
\end{figure}

\subsection{Computational complexity}\label{sec:method:computational}
Compared to similar optimization-based recovery methods that use the entire training set, the STARFISH method is extremely efficient, requiring only a minimal number of forward and backward passes on a small calibration set.
In particular, STARFISH has two stages: a one-time, dense forward pass on $S_{\mathrm{cal}}$ to cache target representations, followed by a small number of recovery training epochs \(T\) on the pruned model.

Specifically, let \(n=|S_{\mathrm{cal}}|\) be the number of calibration samples, \(P\) be the cost of one dense forward or backward pass, and $\widetilde P$ be the equivalent for the pruned network (potentially, $\widetilde P < P$).
Then, since the loss computation is negligible, the time complexity of the STARFISH method is
\[
O\left(nP + Tn\widetilde P\right).
\]
In this paper, across all architectures and comparisons, we use very small values of $n \leq 9{,}000$ and $T \leq 10$, achieving state-of-the-art recovery results.

\paragraph{Comparison with other efficient recovery methods.}
The CORP~\cite{zhang2026corp} recovery method avoids iterative optimization by using a closed-form layerwise recovery for a specific sparsity pattern on ViTs. 
Under this restricted setting, we analyze the complexity where \(\rho\) is the sparsity level, i.e., (\(1-\rho\))-fraction of weights remain active and optimally, $\widetilde P = (1-\rho)P$. Let \(L\) be the number of blocks, \(Q\) be the ViT sequence length and \(d_t\) be a single token dimension. 
Including the dense calibration pass, the total simplified cost of CORP is
\[
O\left(nP + L(1-\rho)^3d_t^3 + nQL(1-\rho)^2d_t^2\right).
\]
We note that $\left(L(1-\rho)^3d_t^3 + nQL(1-\rho)^2d_t^2\right)$ is the overhead of computing the closed form solution, which is comparable to a single forward or backward pass. Thus, the total time complexity of STARFISH is comparable to that of CORP. 

The SNOWS method ~\cite{lucas2024preserving} is more general than CORP and, like STARFISH, can recover a fixed-mask model by matching intermediate representations. However, SNOWS performs local Hessian approximation computation using gradients for every $C$-layer sliding window, and for $I$ Newton steps, thus its ViT recovery cost is approximately
\[
O\left(ICn \widetilde P\right).
\]

In practice, STARFISH recovers a $0.8$-sparse DeiT3-H on a single A10 GPU in 16 minutes using 1,000 examples and 5 epochs, reaching a final \(73\%\) accuracy. We note that even when further reducing the calibration set size to only 500 examples---where there is less than one example per class, and STARFISH completes recovery in 8 minutes---it still outperforms competing methods evaluated at lower sparsity levels. 
Additional runtime results are reported in \Cref{app:runtime}.

\section{Representation alignment}
\label{sec:method:theory}
In this section, we discuss the advantage of intermediate representation similarity optimization over regular output-based optimization.
The STARFISH method treats pruning recovery as internal-state recovery,
rather than matching only the final outputs of the dense model and the pruned model, thus
encouraging the pruned model to follow the dense model's computation throughout the network.

First, we motivate the representation-based optimization by proving that the distance between the representations used as inputs to the classification head is an upper bound on the classification head's output KL-divergence.
We note that even when the classification head is not pruned, while the other weights are, the head's input representation in the pruned network can differ from that in the original network, necessitating recovery. This result further supports prior research on the contribution and meaning of representation alignment in the context of severe pruning recovery.

Formally, for an input $\bx_i \in S_{\text{cal}}$, we denote by $\bh_i, \widetilde \bh_i \in \RR^{d_{\mathrm{out}}}$ the last hidden representations (i.e., the input of the classification head) of the dense and recovered pruned models, respectively. 
Let $A \in \RR^{d_{\mathrm{out}} \times K}$ and $\widetilde A \in \RR^{d_{\mathrm{out}} \times K}$ be the dense and recovered pruned classification heads, where $K$ is the number of classes, and denote by $\bp_i, \widetilde \bp_i$ the corresponding predictive distribution vectors. We prove that 
\[
\frac{1}{n}\sum_{i=1}^n
\mathrm{KL}(\bp_i \,\|\, \widetilde \bp_i)
\le
\frac{1}{2n}
\sum_{i=1}^n\zeta_i
\left(
\|(\bh_i - \widetilde  \bh_i )^\top A\|_2
+
\|\widetilde \bh_i^\top (\widetilde A -  A)\|_2
\right)^2 . 
\] 

\begin{wrapfigure}[25]{l}{0.7\textwidth}
\centering
\begin{tikzpicture}[trim axis left, trim axis right]
\begin{axis}[
    ybar=0.05,
    ymode=log,
    ymin=1e-2, ymax = 30,
    log origin=infty, 
    ylabel={Value (log scale)},
    width=0.7\linewidth,
    height=5cm,
    bar width=30pt,
    xtick={1,2},
    xticklabels={
    {Actual\\$\frac{1}{n}\sum_{i=1}^n \mathrm{KL}(p_i \,\|\, \widetilde p_i)$},
    {Per example bound \\$\frac{1}{2n}
    \sum_{i=1}^n \zeta_i
    \|(\bh_i - \widetilde  \bh_i )^\top A\|_2^2$}
    },
    xticklabel style={
        align=center,
    },
    enlarge x limits=0.35,
    legend style={
        at={(0.5,-0.25)},
        anchor=north,
        legend columns=2
    },
    scale only axis,
    nodes near coords={
        \pgfmathprintnumber[fixed,precision=2]{\pgfplotspointmeta}
    },
    point meta=rawy,
    every node near coord/.append style={
        anchor=north,
        yshift=2pt, color=black, opacity=1
    },
    clip=false
]

\addlegendimage{ybar, fill=gray, draw=gray, fill opacity=0.35, draw opacity=0.35}
\addlegendentry{Before Recovery}
\addlegendimage{ybar, fill=gray, draw=gray}
\addlegendentry{After Recovery}

\addplot+[bar shift=-15pt, fill=myblue, draw=myblue, fill opacity=0.35, draw opacity=0.35]
coordinates {(1,5.7127e+00)};

\addplot+[bar shift=15pt, fill=myblue, draw=myblue]
coordinates {(1,1.3363e-01)};

\addplot+[bar shift=-15pt, fill=mygreen, draw=mygreen, fill opacity=0.35, draw opacity=0.35]
coordinates {(2,9.0598e+00)};

\addplot+[bar shift=15pt, fill=mygreen, draw=mygreen]
coordinates {(2,1.9148e-01)};

\end{axis}
\end{tikzpicture}
\caption{
    Empirical visualization of the representation-based bound for KL divergence before and after
    recovery. After STARFISH recovery, the representation error decreases, and with it both the empirical KL and the corresponding bound decrease sharply.
    }\label{fig:theory_empirical_kl}
\end{wrapfigure}

for some small parameters $\zeta_i$, dependent on the network's normalized output logits. The formal theorem and full proof are given in \Cref{app:alignment}. In \Cref{fig:theory_empirical_kl} we illustrate the main implication of
Theorem~\ref{thm:kl_representation_general} in the common case of an unpruned classification head following a high-sparsity pruned network: STARFISH recovers the final representation successfully and
substantially reduces the predictive KL divergence. For a DeiT-B model pruned with
magnitude pruning to $0.8$-sparsity and recovered with STARFISH, both the measured KL and the representation-based KL bound decrease sharply after recovery. 

We note that our choice of cosine-similarity instead of the difference norm is motivated by prior work using cosine-based criteria for representation alignment, as well as for comparing representations
\cite{sanh2019distilbert,kornblith2019similarity,chen2020simple}, and we extend the optimization for every block, as errors introduced by pruning can accumulate and amplify across depth. In \Cref{app:ablation_mse}, we empirically show that STARFISH recovery outperforms the difference-norm objective in recovery, and in \Cref{fig:theory_empirical_kl}, we demonstrate that it also decreases the difference-norm of representations.

\subsection{Recovery vs. fine-tuning}
\label{sec:experiments:loss_ablation}

To further motivate the STARFISH recovery method, we experimentally show that when only a small unlabeled calibration set is used, the STARFISH alignment objective substantially outperforms other baseline fine-tuning objectives. We demonstrate that recovering dense model representations provides a more effective approach for post-pruning recovery than the direct approach of output-level supervision.

For the output-based losses, we consider the training baseline cross-entropy (CE) loss, and the Kullback--Leibler (KL) divergence, which is a common objective for knowledge distillation.
\Cref{tab:deitt_loss_ablation} compares the methods, as well as the joint STARFISH+KL objective, 
on a DeiT-B pruned using Magnitude Pruning (MP) at different sparsity levels. It shows that cosine alignment consistently outperforms cross-entropy fine-tuning, KL divergence matching, and the joint objective, peaking at $0.8$-sparsity, where cosine alignment achieves $73.26\%$ top-1 accuracy, improving over KL divergence by $13.15$ points, over CE fine-tuning by $16.42$ points, and over STARFISH+KL by $10.26$ points. 
Further implementation details are provided in \Cref{app:experimentaldetails} and \Cref{app:recovery_vs_fine}.

\begin{table}[ht]
\centering
\caption{Comparison of top-1 accuracy after performing various recovery techniques using representation-based and output-based objectives, on a pruned DeiT-B at different sparsity levels. Values in parentheses indicate accuracy retention relative to the dense accuracy. The representation-based recovery surpasses all output-based objectives, as well as the combined alternative.}
\label{tab:deitt_loss_ablation}
\small 
\begin{tabular}{@{}l cccc @{}}
\toprule
& \multicolumn{4}{c}{\textbf{Dense Acc.: $81.73\%$}} \\
\cmidrule(l){2-5}
\textbf{Method/Sparsity} & \textbf{0.5} & \textbf{0.6} & \textbf{0.7} & \textbf{0.8} \\
\midrule
Pruned Acc. & 74.59 (91.26\%) & 56.16 (68.71\%) & 20.02 (24.49\%) & 0.83 (1.02\%)\\
\midrule
Cross-Entropy (CE) & 69.81 (85.42\%) & 70.79 (86.61\%) & 68.26 (83.52\%) & 56.84 (69.55\%) \\
Kullback–Leibler (KL) & 73.88 (90.40\%) & 73.96 (90.49\%) & 71.35 (87.30\%) & 60.11 (73.55\%) \\
STARFISH+KL & 77.46 (94.78\%) & 77.00 (94.21\%) & 74.33 (90.95\%) & 63.00 (77.08\%) \\
STARFISH & \textbf{81.34 (99.52\%)} & \textbf{80.59 (98.61\%)} & \textbf{78.91 (96.55\%)} & \textbf{73.26 (89.64\%)} \\
\bottomrule
\end{tabular}
\end{table}

\section{Experiments}
\label{sec:experiments}

In this section, we evaluate STARFISH on ImageNet-1K \cite{imagenet} classification across vision transformers and convolutional networks, comparing it to competing efficient recovery methods in high-sparsity pruning settings and in the common $0.5$-sparsity setting. To further demonstrate the independence of the STARFISH method from pruning type or method, we compare it with multiple competing methods across multiple pruning variations. 

The first variation is the sparsity pattern, including \emph{unstructured} pruning, where individual weights are removed; \emph{structured} pruning, where whole groups such as channels, neurons, heads, or blocks are removed; or \emph{semi-structured} pruning, where constraints are imposed locally, as in $N{:}M$ sparsity, which retains $N$ weights in each group of $M$ parameters. 

Another considered variation is the pruning scope, where different pruning methods target specific parts of the dense model, such as attention weights, MLP layers, or convolutional weights. Here, we use the convention of using Q, K, and V to refer to the inner weights of the attention unit corresponding to the Query, Key, and Value; Out to refer to the final linear transformation applied to the attention heads, and MLP to refer to the fully connected unit following the multi-head attention. We mention that when discussing sparsity levels, when a model is pruned to a sparsity level imposed on specific layers or components, only the prunable weights reach that final sparsity. 
 
For the calibration set, the STARFISH method uses calibration images sampled from the ImageNet unlabeled test split. Thus, it does not use any supervised signal during recovery, nor does it revisit the training set. Additionally, the calibration set is extremely small, consisting of up to $9{,}000$ images, which are only $0.7\%$ of the training set size, 
making STARFISH recovery especially lightweight.

\begin{wrapfigure}[25]{r}{0.5\textwidth}
\centering
\vspace{-1.4\baselineskip}
\begin{tikzpicture}[trim axis left,
    trim axis right,
    baseline=(current axis.north)
    ]
\begin{axis}[
    title={MobileNetV1 Accuracy vs. Sparsity},
    xlabel={Sparsity Level},
    ylabel={Top-1 Accuracy (\%)},
    xmin=0.45, xmax=0.85,
    ymin=0, ymax=75,
    xtick={0.5, 0.6, 0.7, 0.8},
    ytick={0, 10, 20, 30, 40, 50, 60, 70},
    legend style={
        at={(0.03,0.03)},
        anchor=south west,
        draw=none,
        font=\scriptsize},
    ymajorgrids=true,
    grid style=dashed,
    width=0.9\linewidth,
    height=7cm,
    thick
]

\addplot[color=gray, mark=square, dashed] coordinates {
    (0.5, 63.83) (0.6, 47.15) (0.7, 11.68) (0.8, 0.43)
};
\addlegendentry{MP}

\addplot[color=orange, mark=triangle] coordinates {
    (0.5, 68.91) (0.6, 60.90) (0.7, 29.36) (0.8, 0.24)
};
\addlegendentry{WF}

\addplot[color=cyan, mark=diamond] coordinates {
    (0.5, 70.21) (0.6, 66.37) (0.7, 55.11) (0.8, 16.38)
};
\addlegendentry{CBS}

\addplot[color=ForestGreen, mark=+] coordinates {
    (0.5, 70.42) (0.6, 67.30) (0.7, 59.40) (0.8, 29.78)
};
\addlegendentry{CHITA}

\addplot[color=red, mark=x] coordinates {
    (0.5, 70.35) (0.6, 67.18) (0.7, 58.40) (0.8, 25.82)
};
\addlegendentry{FALCON}

\addplot[color=blue, mark=o, thick] coordinates {
    (0.5, 70.10) (0.6, 68.70) (0.7, 65.28) (0.8, 54.83)
};
\addlegendentry{SNOWS}

\addplot[color=Salmon, mark=*, ultra thick] coordinates {
    (0.5, 71.12) (0.6, 70.1) (0.7, 67.67) (0.8, 60.66)
};
\addlegendentry{\textbf{STARFISH}}

\addplot[dashed, black, thin] coordinates {(0.45, 71.95) (0.85, 71.95)};
\node at (axis cs:0.52,72.8) {\scriptsize Dense};

\end{axis}
\end{tikzpicture}
\caption{
Comparison of top-1 accuracy results of unstructured pruning and recovery methods on MobileNetV1. STARFISH recovery exceeds other methods, within a larger margin at high sparsity levels.}
 \label{fig:lizard_high_sparsity_mobile}
 \end{wrapfigure}

\subsection{High sparsity STARFISH}

In this section, we present STARFISH in severe pruning regimes, where recovery is most challenging and the differences between recovery methods are most pronounced. 
We compare STARFISH across sparsity levels on both MobileNetV1 and DeiT models. The advantage of STARFISH grows with sparsity: while the gains are moderate at $0.5$-sparsity, they become substantially larger once $70\text{-}85\%$ of the prunable weights are removed, where other pruning and recovery methods present smaller improvements.
For example, after recovering a $0.85$-sparse DeiT3-H model, STARFISH retains $92.08\%$ of the dense model accuracy, successfully increasing the pruned accuracy by $72.36$ points. Recovering a $0.75$-sparse DeiT-B model, STARFISH achieves accuracy with a staggering $34.33$ percentage-point improvement over competing recovery methods.

In \Cref{fig:lizard_high_sparsity_corp}, we compare STARFISH for structured pruning of DeiT-B, DeiT3-H, and DeiT-S. 
STARFISH systematically surpasses the state-of-the-art CORP recovery method~\cite{zhang2026corp}, when recovering a model pruned using the CORP mask selection method,
over the range of sparsity levels and scopes. 

A similar pattern is observed in \Cref{fig:lizard_high_sparsity_mobile}, where we compare STARFISH to several leading unstructured pruning methods on MobileNetV1. Here, STARFISH consistently outperforms the other methods across all settings.
STARFISH, like SNOWS~\cite{lucas2024preserving}, uses MP for mask selection in this comparison. In both figures, Magnitude Pruning (MP) serves as the baseline with no recovery.

\subsection{Half sparsity STARFISH}

In this section, we compare STARFISH with prior recovery methods in the common $0.5$-sparsity regime, showing that STARFISH achieves the highest recovered accuracy in every setting we evaluate.
Across different ViT models, we consider both structured pruning as well as $2{:}4$ semi-structured pruning in different pruning scopes.  For example, in the DeiT-T MLP+QK setting, pruning reduces accuracy from
$72.01\%$ to $7.57\%$, retaining only $10.51\%$ of the dense accuracy. STARFISH recovers the model to $81.95\%$ of the dense accuracy, compared with $50.41\%$ retention for CORP. This is a $31.54$ percentage-point improvement in retention. 

In \Cref{tab:structured} we compare STARFISH to other recovery methods, presenting retention percentage, denoting the recovered accuracy divided by the original accuracy, and the improvement gap between different retention scores achieved within the same setting.
In Panel A, we compare STARFISH to the CORP recovery method~\cite{zhang2026corp},
under $0.5$-sparsity structured pruning across various DeiT models, with both performed after CORP-pruning.
The comparison spans multiple pruning targets, including MLP layers, attention QK projections, and the joint MLP+QK setting. 
The gains are especially large in the more challenging joint MLP+QK setting, where pruning affects multiple components
simultaneously. 

In Panel B of \Cref{tab:structured}, we compare STARFISH with SNOWS~\cite{lucas2024preserving}, 
under $2{:}4$ semi-structured sparsity on ViT models, with both methods performed after magnitude pruning.
The comparison includes both attention-only pruning (QKV) and broader pruning scopes that additionally include output projections and MLP layers.
These results show that STARFISH consistently improves recovered accuracy beyond structured pruning and also improves recovery under hardware-friendly semi-structured sparsity. Further implementation details and extended results are provided in \Cref{app:experimentaldetails}.

\begin{table}[t!]
\centering
\setlength{\tabcolsep}{5pt}
\caption{
Top-1 accuracy improvement before and after different post-pruning recovery methods at $0.5$-sparsity. We report the \textbf{Retention} percentage, which denotes the percentage of the dense model's accuracy recovered, and the \textbf{Improvement} difference percentage, which denotes the gap between the competing retention percentages, showing that STARFISH achieves large improvement gaps over competing methods across settings. \textbf{Panel A:} Comparing STARFISH to CORP-recovery across MLP, QK, and MLP+QK structured pruning scopes on DeiT models. \textbf{Panel B:} Comparing STARFISH to SNOWS across different semi-structured pruning scopes on ViT models. Best result within each model and scope is bolded. 
}
\small
\label{tab:structured}
{%
\begin{tabular}{@{}lclcccc@{}}
\toprule
\makecell[l]{Model} 
& \makecell{Pruning\\Scope}  
& \makecell{Method} 
& \makecell{Calib.\\Size} 
& \makecell{Recovered\\Acc.}  
& \makecell{Retention\\(\%)}  
& \makecell{Improvement\\($\Delta$\%)} \\

\midrule
\multicolumn{7}{@{}l}{\textbf{Panel A: DeiT / CORP setting ($0.5$ structured sparsity)}}\\
\midrule

\multirow{6}{*}{\shortstack[l]{DeiT-T \\(Dense Acc. 72.01)}}
& \multirow{2}{*}{\shortstack{MLP\\(Pruned Acc. 10.93)}}
  &  CORP & -- & 53.20 & 73.88\% & \multirow{2}{*}{\sgreen{\textbf{10.51}}} \\
& &  STARFISH & 1,000 &  \textbf{60.77} & \textbf{84.39\%} \\
\cmidrule(l){2-7}
& \multirow{2}{*}{\shortstack{QK\\(Pruned Acc. 62.31)}}
  &  CORP & --  & 64.20 & 89.15\% & \multirow{2}{*}{\sgreen{\textbf{9.43}}} \\
& &  STARFISH & 1,000 & \textbf{70.99} & \textbf{98.58\%} \\
\cmidrule(l){2-7}
& \multirow{2}{*}{\shortstack{MLP+QK\\(Pruned Acc. 7.57)}}
  &  CORP & --  & 36.30 & 50.41\% & \multirow{2}{*}{\sgreen{\textbf{31.54}}} \\
& &  STARFISH & 1,000 &\textbf{59.01} & \textbf{81.95\%} \\
\midrule

\multirow{6}{*}{\shortstack[l]{DeiT-S \\(Dense Acc. 79.72)}}
& \multirow{2}{*}{\shortstack{MLP\\(Pruned Acc. 45.51)}}
  &  CORP & --  & 65.85 & 82.60\% & \multirow{2}{*}{\sgreen{\textbf{8.39}}} \\
& &  STARFISH & 2,500 & \textbf{72.54} & \textbf{90.99\%} \\
\cmidrule(l){2-7}
& \multirow{2}{*}{\shortstack{QK\\(Pruned Acc. 54.14)}}
  &  CORP & --  & 72.50 & 90.94\% & \multirow{2}{*}{\sgreen{\textbf{8.33}}} \\
& &  STARFISH & 2,500  & \textbf{79.14} & \textbf{99.27\%} \\
\cmidrule(l){2-7}
& \multirow{2}{*}{\shortstack{MLP+QK\\(Pruned Acc. 17.32)}}
  &  CORP & --   & 55.50 & 69.62\% & \multirow{2}{*}{\sgreen{\textbf{20.18}}} \\
& &  STARFISH & 2,500  & \textbf{71.59} & \textbf{89.80\%} \\
\midrule

\multirow{6}{*}{\shortstack[l]{DeiT-B \\ (Dense Acc. 81.73)}}
& \multirow{2}{*}{\shortstack{MLP\\(Pruned Acc. 54.56)}}
  &  CORP & --  & 68.67 & 84.02\% & \multirow{2}{*}{\sgreen{\textbf{8.86}}} \\
& &  STARFISH & 5,000 & \textbf{75.91} & \textbf{92.88\%} \\
\cmidrule(l){2-7}
& \multirow{2}{*}{\shortstack{QK\\(Pruned Acc. 73.40)}}
  &  CORP & --  & 80.80 & 98.86\% & \multirow{2}{*}{\sgreen{\textbf{0.87}}} \\
& &  STARFISH & 5,000 & \textbf{81.51} & \textbf{99.73\%} \\
\cmidrule(l){2-7}
& \multirow{2}{*}{\shortstack{MLP+QK\\(Pruned Acc. 31.19)}}
  &  CORP & --  & 67.00 & 81.98\% & \multirow{2}{*}{\sgreen{\textbf{10.73}}} \\
& &  STARFISH & 5,000 & \textbf{75.77} & \textbf{92.71\%} \\
\midrule

\multirow{6}{*}{\shortstack[l]{DeiT3-L \\  (Dense Acc. 84.58)}}
& \multirow{2}{*}{\shortstack{MLP\\(Pruned Acc. 77.81)}}
  &  CORP & --  & 80.90 & 95.65\% & \multirow{2}{*}{\sgreen{\textbf{2.74}}} \\
& &  STARFISH & 7,000  & \textbf{83.22} & \textbf{98.39\%} \\
\cmidrule(l){2-7}
& \multirow{2}{*}{\shortstack{QK\\(Pruned Acc. 82.97)}}
  &  CORP & -- & 83.60 & 98.84\% & \multirow{2}{*}{\sgreen{\textbf{0.99}}} \\
& &  STARFISH & 7,000 & \textbf{84.44} & \textbf{99.83\%} \\
\cmidrule(l){2-7}
& \multirow{2}{*}{\shortstack{MLP+QK\\(Pruned Acc. 72.09)}}
  &  CORP & --  & 79.40 & 93.88\% & \multirow{2}{*}{\sgreen{\textbf{3.89}}} \\
& &  STARFISH & 7,000 & \textbf{82.69} & \textbf{97.77\%} \\
\midrule

\multirow{6}{*}{\shortstack[l]{DeiT3-H \\ \scriptsize (Dense Acc. 84.97)}}
& \multirow{2}{*}{\shortstack{MLP\\(Pruned Acc. 81.79)}}
  &  CORP & --   & 83.70 & 98.51\% & \multirow{2}{*}{\sgreen{\textbf{0.82}}} \\
& &  STARFISH & 9,000 & \textbf{84.40} & \textbf{99.33\%} \\
\cmidrule(l){2-7}
& \multirow{2}{*}{\shortstack{QK\\(Pruned Acc. 83.23)}}
  &  CORP & --  & 84.20 & 99.09\% & \multirow{2}{*}{\sgreen{\textbf{0.75}}}\\
& &  STARFISH & 9,000  & \textbf{84.83} & \textbf{99.84\%} \\
\cmidrule(l){2-7}
& \multirow{2}{*}{\shortstack{MLP+QK\\(Pruned Acc. 78.27)}}
  &  CORP & -- & 82.80 & 97.45\% & \multirow{2}{*}{\sgreen{\textbf{1.48}}} \\
& &  STARFISH & 9,000 & \textbf{84.06} & \textbf{98.93\%} \\
\midrule

\multicolumn{7}{@{}l}{\textbf{Panel B: ViT / SNOWS setting ($2{:}4$ semi-structured sparsity)}} \\
\midrule

\multirow{4}{*}{\shortstack[l]{ViT/B-16 \\ (Dense Acc. 80.40)}}
& \multirow{2}{*}{\shortstack{QKV\\(Pruned Acc. 78.92)}}
  & SNOWS  & 5,000
 & 79.45 & 98.82\% & \multirow{2}{*}{\sgreen{\textbf{0.94}}}   \\
& & STARFISH   & 5,000 & \textbf{80.21} & \textbf{99.76\%} \\
\cmidrule(l){2-7}
& \multirow{2}{*}{\shortstack{QKV+Out+MLP\\(Pruned Acc. 70.86)}}
  & SNOWS  & 5,000 & 76.57 & 95.24\% & \multirow{2}{*}{\sgreen{\textbf{3.11}}}\\
& & STARFISH   & 5,000 & \textbf{79.07} & \textbf{98.35\%} \\

\midrule
\multirow{4}{*}{\shortstack[l]{ViT/L-16 \\ (Dense Acc. 84.20)}}
& \multirow{2}{*}{\shortstack{QKV\\(Pruned Acc. 75.40)}}
  & SNOWS  & 20,000 & 81.01 & 96.21\% & \multirow{2}{*}{\sgreen{\textbf{2.32}}} \\
& & STARFISH   & 7,000 & \textbf{82.96} & \textbf{98.53\%} \\
\cmidrule(l){2-7}
& \multirow{2}{*}{\shortstack{MLP\\(Pruned Acc. 0.20)}}
  & SNOWS  & 20,000 & 69.71 & 82.79\% & \multirow{2}{*}{\sgreen{\textbf{3.27}}}\\
& & STARFISH   & 7,000 & \textbf{72.46} & \textbf{86.06\%} \\
\bottomrule
\end{tabular}
}
\end{table}

\section{Conclusion and future directions}

We presented STARFISH, a post-pruning recovery method that treats sparse-model repair as internal-state healing. 
Given a fixed pruning mask, STARFISH updates only the remaining weights so that the sparse model's intermediate representations match those of the dense model across all blocks. While representation alignment is known to improve other training-related tasks, STARFISH’s use of this objective yields significant performance gains toward the pruning recovery goal.
This simple objective is modular across pruning methods, sparsity patterns, pruning scopes, sparsity levels, and architectures. Because STARFISH uses a tiny calibration set that consists of images outside the training set, it is efficient and can be applied when the original training data is unavailable, requiring only the dense model weights, the pruning mask, and a small unlabeled calibration set.

Empirically, STARFISH improves recovery across structured, semi-structured, and unstructured sparsity patterns, and remains effective across different pruning methods and pruning scopes. In moderate pruning regimes, STARFISH substantially improves dense-accuracy retention; for example, after pruning DeiT-T with MLP+QK structured sparsity at $0.5$-sparsity, STARFISH recovers the model to $81.95\%$ of the dense accuracy, while competitors reach $31.54$ percentage points lower retention. In severe pruning regimes, STARFISH remains effective: for example, after pruning DeiT3-H to $0.85$-sparsity, STARFISH recovers the model to $78.24\%$ top-1 accuracy, improving by $72.36$ points over the pruned model.

Future work could extend this approach in several directions. 
First, adaptive block weighting may improve recovery by emphasizing the layers most damaged by pruning, and using this representation mismatch as a criterion for pruning itself might yield even further improvement. 
Second, applying the same alignment principle to larger-scale multimodal and language models may provide a practical route to efficient post-pruning recovery beyond image classification. Finally, this idea could be incorporated directly into training by encouraging a model and its induced sparse counterpart to maintain aligned internal representations. Such a sparsity-aware representation objective may produce dense models whose pruned versions are easier to recover, or even remain performant immediately after pruning.

\clearpage

\bibliographystyle{plainnat}
\bibliography{references}

\appendix
\input{appendix}

\newpage

\end{document}

%% file: appendix.tex
\section{Representation alignment}
\subsection{Representation bound for KL divergence}\label[appendix]{app:alignment}
We state the definitions and the theorem from \Cref{sec:method:theory}. 
For an input $\bx_i \in S_{\text{cal}}$, we denote by $\bh_i, \widetilde \bh_i \in \RR^{d_{\mathrm{out}}}$ the last hidden representations (i.e., the input of the classification head) of the dense and recovered pruned models, respectively. 
Let $A \in \RR^{d_{\mathrm{out}} \times K}$ denote the dense classification head, where $K$ is the number of classes, and let $\widetilde A \in \RR^{d_{\mathrm{out}} \times K}$ denote the
recovered pruned head. We denote the output logits vectors and their difference by 
\[
\bz_i = \bh_i ^\top A 
\qquad
\widetilde \bz_i = \widetilde\bh_i ^\top \widetilde A 
\qquad \Delta \bz_i=\widetilde  \bz_i- \bz_i,
\]
and the corresponding predictive distribution vectors by
\[
\bp_i = \operatorname{softmax}(\bz_i),
\qquad
\widetilde \bp_i = \operatorname{softmax}(\widetilde \bz_i).
\]

\begin{theorem}[Representation and head recovery controls predictive KL]
\label{thm:kl_representation_general}
For a calibration set $S_{\text{cal}} = \{\bx_i\}_{i=1}^n$, we denote \footnote{If a denominator in the definitions in \eqref{eq:defs} is zero, then so is the numerator and we define the corresponding ratio to be zero.}
\begin{align}\label{eq:defs}
M_i=
\frac{\|(\widetilde \bh_i-\bh_i)^\top A\|_2}
{\|(\widetilde \bh_i-\bh_i)^\top\|_2},
\qquad
\widetilde M_i=
\frac{\|\widetilde \bh_i^\top (\widetilde A-A)\|_2}
{\|\widetilde \bh_i^\top\|_2},
\qquad 
\zeta_i
=
\sup_{t\in[0,1]}
\frac{\Delta \bz_i^\top
\nabla^2\psi(\bz_i+t\Delta \bz_i)
\Delta \bz_i}{\|\Delta \bz_i\|_2^2}
\end{align}  
where $\psi(u)=\log \sum_{k=1}^{K} e^{u_k}$ is the log-sum-exp function. Then,
\[
\frac{1}{n}\sum_{i=1}^n
\mathrm{KL}(\bp_i \,\|\, \widetilde \bp_i)
\le
\frac{1}{2n}
\sum_{i=1}^n\zeta_i
\left(
M_i\|(\bh_i - \widetilde  \bh_i )^\top\|_2
+
\widetilde M_i \|\widetilde \bh_i^\top\|_2
\right)^2 . 
\] 
In particular, if the classification head is not pruned, so that 
$\widetilde A=A$, then $\widetilde M_i =0$ and 
\[
\frac{1}{n}\sum_{i=1}^n
\mathrm{KL}(\bp_i \,\|\, \widetilde \bp_i)
\le
\frac{1}{2n}
\sum_{i=1}^n \zeta_i
M_i^2 \|(\bh_i - \widetilde \bh_i)^\top\|_2^2.
\]

\end{theorem}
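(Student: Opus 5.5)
The plan is to express the KL divergence between two softmax distributions as a Bregman divergence of the log-sum-exp function $\psi$. Then a second-order Taylor expansion with integral (or Lagrange) remainder bounds it by the curvature along the segment from $\bz_i$ to $\widetilde\bz_i$, which is exactly what $\zeta_i$ captures. After that, we bound $\|\Delta\bz_i\|_2$ by a triangle inequality that separates the representation error from the head error.

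\textbf{Step 1 (KL as Bregman divergence).} Since $\bp_i=\nabla\psi(\bz_i)$ and $\log \bp_i = \bz_i - \psi(\bz_i)\mathbf{1}$, we get
\[
\mathrm{KL}(\bp_i\|\widetilde\bp_i)=\sum_k p_{i,k}\big(z_{i,k}-\widetilde z_{i,k}\big) - \psi(\bz_i)+\psi(\widetilde\bz_i)
= \psi(\widetilde\bz_i)-\psi(\bz_i)-\nabla\psi(\bz_i)^\top\Delta\bz_i .
\]

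\textbf{Step 2 (Taylor remainder).} Set $g(t)=\psi(\bz_i+t\Delta\bz_i)$, which is smooth. Taylor's theorem with integral remainder gives
\[
\mathrm{KL}(\bp_i\|\widetilde\bp_i)=\int_0^1(1-t)\,\Delta\bz_i^\top\nabla^2\psi(\bz_i+t\Delta\bz_i)\Delta\bz_i\,dt .
\]
By the definition of $\zeta_i$, the integrand is at most $(1-t)\zeta_i\|\Delta\bz_i\|_2^2$, and integrating yields $\mathrm{KL}\le \tfrac12\zeta_i\|\Delta\bz_i\|_2^2$. If $\Delta\bz_i=0$, both sides vanish, consistent with the zero-ratio convention. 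The supremum is finite because the Hessian $\mathrm{diag}(\bp)-\bp\bp^\top$ has operator norm at most $1/2$, so in fact $\zeta_i\le 1/2$.

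\textbf{Step 3 (splitting $\Delta\bz_i$).} Write
\[
\Delta\bz_i=\widetilde\bh_i^\top\widetilde A-\bh_i^\top A=(\widetilde\bh_i-\bh_i)^\top A+\widetilde\bh_i^\top(\widetilde A-A).
\]
The triangle inequality then gives $\|\Delta\bz_i\|_2\le \|(\widetilde\bh_i-\bh_i)^\top A\|_2+\|\widetilde\bh_i^\top(\widetilde A-A)\|_2$. By the definitions of $M_i$ and $\widetilde M_i$, these two terms equal $M_i\|(\bh_i-\widetilde\bh_i)^\top\|_2$ and $\widetilde M_i\|\widetilde\bh_i^\top\|_2$. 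This also holds in the degenerate zero-denominator cases, since there the numerator is zero too. Because $\zeta_i\ge 0$ (the Hessian is PSD by convexity of $\psi$), we may square this bound and multiply by $\zeta_i/2$. Averaging over $i$ gives the main inequality. For the special case, $\widetilde A=A$ makes the numerator of $\widetilde M_i$ vanish, so $\widetilde M_i=0$ and the bound reduces as stated.

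\textbf{Main obstacle.} The only delicate point is Step 2: obtaining the exact constant $\tfrac12$ and staying consistent with the $\sup_t$ definition of $\zeta_i$. The integral-remainder form handles this cleanly, because $\int_0^1(1-t)\,dt=\tfrac12$. Everything else is algebra and the triangle inequality.
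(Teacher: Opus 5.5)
Your proposal is correct and follows essentially the same route as the paper: you write the KL divergence as the Bregman divergence of log-sum-exp, apply Taylor's theorem with integral remainder bounded via $\zeta_i$, and split $\Delta\bz_i$ by the triangle inequality into representation and head errors. You also observe explicitly that $\zeta_i\ge 0$ (by convexity of $\psi$), which is what justifies multiplying the squared bound by $\zeta_i/2$; the paper leaves that step implicit.
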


\begin{proof}

We start by showing that $\nabla\psi=\operatorname{softmax}$.\\
Let $\bu\in\RR^{K}$. 
By the chain rule, for each $j\in[K]$ we have
\[
\frac{\partial \psi}{\partial u_j}(\bu)=
\frac{1}{\sum_{k=1}^{K} e^{u_k}} \cdot \frac{\partial}{\partial u_j}
   \!\left(\sum_{k=1}^{K} e^{u_k}\right)
= \frac{e^{u_j}}{\sum_{k=1}^{K} e^{u_k}}.
\]
which is exactly $\operatorname{softmax}(\bu)_j$. 
Recall that the KL divergence between distributions $\bp$ and $\bq$ over $[K]$ is defined as 
\[
\mathrm{KL}(\bp \,\|\, \bq)=\sum_{k\in[K]} p_k \log\frac{p_k}{q_k}
\]
and the Bregman divergence of a differentiable function $f$ is defined as \[
D_{f}(x,y)=f(x)-f(y)-\inner{\nabla f(y),x-y}
\]
Let $\bu,\bv\in\RR^K$. We define 
\[
\bp=\nabla\psi(\bu),
\qquad
\bq=\nabla\psi(\bv),
\]
the softmax distributions. 
We aim to prove that 
\[
\mathrm{KL}(\bp \,\|\, \bq) = D_{\psi}(\bv,\bu)
\]
Indeed, since $\nabla \psi(\bu)=\operatorname{softmax}( \bu)=\bp$, we have $p_k = e^{u_k}/\sum_{j\in [K]} e^{u_j}$. Taking logarithms,
\[
\log p_k \;=\; \log \frac{e^{u_k}}{\sum_{j\in [K]} e^{u_j}} \;=\; u_k - \log\sum_{j\in [K]} e^{u_j}
\;=\; u_k - \psi(\bu).
\]
Likewise,
\[
\log q_k \;=\; v_k - \psi(\bv).
\]
We have, 
\begin{align*}
\mathrm{KL}(\bp\,\|\,\bq)
&\;=\; \sum_{k=1}^{K} p_k\bigl(\log \frac{p_k}{q_k}\bigr) 
\\
&\;=\; \sum_{k=1}^{K} p_k\bigl(\log p_k - \log q_k\bigr) \\
&\;=\; \sum_{k=1}^{K} p_k
        \Bigl[\bigl(u_k - \psi(\bu)\bigr) - \bigl(v_k - \psi(\bv)\bigr)\Bigr] \\
&\;=\; \sum_{k=1}^{K} p_k\,(u_k - v_k)
       \;+\; \bigl(\psi(\bv) - \psi(\bu)\bigr)\sum_{k=1}^{K} p_k.
\end{align*}
Since $p$ is a distribution on $[K]$, we have $\sum_{k=1}^{K} p_k = 1$. Thus
\begin{align*}
\mathrm{KL}(\bp\,\|\,\bq)
&\;=\; \sum_{k=1}^{K} p_k\,(u_k - v_k) \;+\; \psi(\bv) - \psi(\bu)  
\\
&\;=\; \psi(\bv) \;-\; \psi(\bu) \;-\; \sum_{k=1}^{K} p_k\,(v_k - u_k)
\end{align*}
By definition $\bp=\nabla \psi(\bu)$, hence
\[
\sum_{k=1}^{K} p_k\,(v_k - u_k)
\;=\; \inner{\nabla \psi(\bu),\bv-\bu}.
\]
Substituting, we finally get 
\begin{align*}
\mathrm{KL}(\bp\,\|\,\bq)
&\;=\;  \psi(\bv) - \psi(\bu) \;-\; \inner{\nabla \psi(\bu),\bv-\bu}
\\
&\;=\; D_\psi(\bv,\bu)
\end{align*}
We substitute $\bz_i$ for $\bu$ and $\widetilde\bz_i$ for $\bv$ and since $\nabla \psi(\bz_i)=\bp_i$ and $\nabla \psi(\widetilde \bz_i)=\widetilde \bp_i$ we get 
\begin{align}\label{eq:KL_bregman}
\mathrm{KL}(\bp_i\,\|\,\widetilde \bp_i) 
\;=\; D_\psi(\widetilde \bz_i,\bz_i)
\end{align} 
The multivariate version of Taylor's theorem tells us that for a twice continuously-differentiable function $f:\RR^K\to\RR$ at point $\by\in\RR^K$ we have $\forall \bh \in \RR^K$ 
\[
f(\by+\bh)
=
f(\by)
+
\langle \nabla f(\by),\bh\rangle
+
\int_0^1 (1-t)\,
\bh^\top \nabla^2 f(\by+t\bh)\bh\,dt.
\]
We apply this formula to the function $\psi$, since it is twice continuously-differentiable, with
\[
\by=\bz_i,
\qquad
\bh=\Delta \bz_i.
\]
Thus $\by+\bh=\widetilde \bz_i$, and we have 
\[
\psi(\widetilde \bz_i)
=
\psi(\bz_i)
+
\left\langle \nabla\psi(\bz_i),\Delta \bz_i\right\rangle
+
\int_0^1 (1-t)\,
\Delta \bz_i^\top
\nabla^2\psi(\bz_i+t\Delta \bz_i)
\Delta \bz_i
\,dt.
\]
Rearranging gives
\[
\psi(\widetilde \bz_i)
-
\psi(\bz_i)
-
\left\langle \nabla\psi(\bz_i),\Delta \bz_i\right\rangle
=
\int_0^1 (1-t)\,
\Delta \bz_i^\top
\nabla^2\psi(\bz_i+t\Delta \bz_i)
\Delta \bz_i
\,dt.
\]
From what we showed in \eqref{eq:KL_bregman}, we get
\begin{align}\label{eq:KL_integral}
\mathrm{KL}(\bp_i\,\|\,\widetilde \bp_i)
\;=\;
\psi(\widetilde \bz_i) - \psi(\bz_i) \;-\; \inner{\nabla \psi(\bz_i),\Delta \bz_i} \\
\;=\;
\int_0^1 (1-t)\,
\Delta \bz_i^\top
\nabla^2\psi(\bz_i+t\Delta \bz_i)
\Delta \bz_i
\,dt.
\end{align}
By the definition of the supremum, for every $t\in[0,1]$ we have
\[
\frac{\Delta \bz_i^\top\nabla^2\psi(\bz_i+t\Delta \bz_i)\Delta \bz_i}{\|\Delta \bz_i\|_2^2}\leq \zeta_i.
\]
Multiplying both sides by $\|\Delta \bz_i\|_2^2$ gives
\[
\Delta \bz_i^\top\nabla^2\psi(\bz_i+t\Delta \bz_i)\Delta \bz_i\leq\zeta_i\|\Delta \bz_i\|_2^2.
\]
Therefore,
\begin{align*}
&\int_0^1(1-t)\,\Delta \bz_i^\top\nabla^2\psi(\bz_i+t\Delta \bz_i)\Delta \bz_i\,dt 
\\&\qquad\leq\int_0^1(1-t)\,\zeta_i\|\Delta \bz_i\|_2^2\,dt 
\\&\qquad=\zeta_i\|\Delta \bz_i\|_2^2\int_0^1(1-t)\,dt \\&\qquad=\frac{\zeta_i}{2}\|\Delta \bz_i\|_2^2.
\end{align*}
In particular, if we use the fact that the softmax function is $\frac{1}{2}$ Lipschitz \cite{nair2025softmax}, we get that its Jacobian is bounded by $\frac{1}{2}$. But its Jacobian is the Hessian of the log-sum-exp function, thus for every $t\in[0,1]$,
\[
\Delta \bz_i^\top\nabla^2\psi(\bz_i+t\Delta \bz_i)\Delta \bz_i\leq\frac{1}{2}\|\Delta \bz_i\|_2^2.
\]
Hence
\[
\zeta_i\leq \frac{1}{2},
\]
and consequently
\[\int_0^1(1-t)\,\Delta \bz_i^\top\nabla^2\psi(\bz_i+t\Delta \bz_i)\Delta \bz_i\,dt\leq\frac{1}{4}\|\Delta \bz_i\|_2^2.
\]
From \eqref{eq:KL_integral} we get 
\begin{align}\label{eq:kl_zeta}
\mathrm{KL}(\bp_i\,\|\,\widetilde \bp_i) 
\;\le\;
\frac{1}{2}\zeta_i \|\Delta \bz_i\|_2^2
\;\leq\;
\frac{1}{4}\|\Delta \bz_i\|_2^2.
\end{align}
It remains to relate the logit error to representation and head recovery. Since 
\[
\Delta z_i
\;=\;
\widetilde \bh_i^\top \widetilde A \;-\; \bh_i^\top A
\;=\;
(\widetilde \bh_i-\bh_i)^\top A
\;+\;
\widetilde \bh_i^\top (\widetilde A-A) ,
\]
the triangle inequality gives 
\[
\|\Delta \bz_i\|_2
\le
\|(\widetilde \bh_i-\bh_i)^\top A\|_2
+
\|\widetilde \bh_i^\top (\widetilde A-A)\|_2 .
\]
We note that we can use the sub-multiplicativity of the norm and get 
\[
\|\Delta \bz_i\|_2
\le
\| A\|_2\|(\widetilde \bh_i-\bh_i)\|_2
+
\|(\widetilde A-A)\|_2 \|\widetilde \bh_i\|_2 .
\]
But using the definitions in \eqref{eq:defs} we can get 
\begin{align}
\|\Delta \bz_i\|_2
\le
M_i \|(\bh_i-\widetilde \bh_i)\|_2
+
\widetilde M_i \|\widetilde \bh_i\|_2.
\end{align}
Substituting into \eqref{eq:kl_zeta} and averaging over the calibration examples concludes the proof.
\end{proof}

We note that using the two global bounds introduced in the proof gives us instead the following bound:
\[
\frac{1}{n}\sum_{i=1}^n
\mathrm{KL}(\bp_i \,\|\, \widetilde \bp_i)
\le
\frac{1}{4n}
\sum_{i=1}^n
\left(
\|A\|_2\|(\bh_i - \widetilde  \bh_i )^\top\|_2
+
\|\widetilde A-A\|_2 \|\widetilde \bh_i^\top\|_2
\right)^2 . 
\]
The global bound is valid, but can be overly conservative in practice. 
Indeed, the proof above keeps two example-dependent quantities: $M_i$, which measures the gain of the classification head in the realized representation-error direction, and $\zeta_i$, which measures the curvature of the log-sum-exp function along the corresponding path. 
Both quantities admit worst-case global upper bounds, namely $M_i \leq \|A\|_2$ and $\zeta_i \leq \frac{1}{2}$. 
Substituting these worst-case constants yields the global bound shown in \Cref{fig:four_bars}. 
Although this bound still upper-bounds the empirical KL divergence, it is much looser than the local, batch-dependent bound.

\Cref{fig:zeta_mi_bound} visualizes the empirical distributions of $M_i$ and $\zeta_i$ over the calibration examples and compares them to their corresponding global upper bounds. 
The figure shows that the realized values are substantially smaller than their worst-case bounds: the head gain $M_i$ is typically well below $\|A\|_2$, and the local curvature $\zeta_i$ is far below the worst-case smoothness constant $\frac{1}{2}$.

In Figures \ref{fig:theory_empirical_kl}, \ref{fig:four_bars}, and \ref{fig:zeta_mi_bound}, we empirically approximate $\zeta_i$ by taking the maximum over $\frac{\Delta \bz_i^\top\nabla^2\psi(\bz_i+t\Delta \bz_i)\Delta \bz_i}{\|\Delta \bz_i\|_2^2}$ for $20$ points in $[0,1]$.

\begin{figure}[H]
\centering
\begin{tikzpicture}[trim axis left, trim axis right]
\begin{axis}[
    ybar=0.05,
    ymode=log,
    ymin=1e-2,
    log origin=infty, 
    ylabel={Value (log scale)},
    width=0.8\linewidth,
    height=5cm,
    bar width=30pt,
    xtick={1,2,3,4},
    xticklabels={
    {Actual},
    {Per example},
    {Head-norm},
    {Global},
    },
    xticklabel style={
        align=center,
    },
    enlarge x limits=0.35,
    legend style={
        at={(0.5,-0.25)},
        anchor=north,
        legend columns=2
    },
    scale only axis,
    nodes near coords={
        \pgfmathprintnumber[fixed,precision=2]{\pgfplotspointmeta}
    },
    point meta=rawy,
    every node near coord/.append style={
        anchor=north,
        yshift=2pt, color=black, opacity=1
    },
    clip=false
]

\addlegendimage{ybar, fill=gray, draw=gray, fill opacity=0.35, draw opacity=0.35}
\addlegendentry{Before Recovery}
\addlegendimage{ybar, fill=gray, draw=gray}
\addlegendentry{After Recovery}

\addplot+[bar shift=-15pt, fill=myblue, draw=myblue, fill opacity=0.35, draw opacity=0.35]
coordinates {(1,5.7127e+00)};

\addplot+[bar shift=15pt, fill=myblue, draw=myblue]
coordinates {(1,1.3363e-01)};

\addplot+[bar shift=-15pt, fill=mygreen, draw=mygreen, fill opacity=0.35, draw opacity=0.35]
coordinates {(2,9.0598e+00)};

\addplot+[bar shift=15pt, fill=mygreen, draw=mygreen]
coordinates {(2,1.9148e-01)};

\addplot+[bar shift=-15pt, fill=myred, draw=myred, fill opacity=0.35, draw opacity=0.35]
coordinates {(3,6.5355e+01)};

\addplot+[bar shift=15pt, fill=myred, draw=myred]
coordinates {(3,1.5744e+00)};

\addplot+[bar shift=-15pt, fill=mypurple, draw=mypurple, fill opacity=0.35, draw opacity=0.35]
coordinates {(4,3.2865e+03)};

\addplot+[bar shift=15pt, fill=mypurple, draw=mypurple]
coordinates {(4,3.9926e+02)};

\node[
    anchor=north,
    align=left,
    font=\small,
    text width=0.95\linewidth
] at (axis description cs:0.5,-0.33) {
\begin{tabular}{@{}ll@{}}
\textcolor{myblue}{\rule{1.2em}{0.8em}} &
Actual: $\displaystyle \frac{1}{n}\sum_{i=1}^n \mathrm{KL}(p_i \,\|\, \widetilde p_i)$ \\[0.6em]

\textcolor{mygreen}{\rule{1.2em}{0.8em}} &
Per-example bound: $\displaystyle \frac{1}{2n}
\sum_{i=1}^n \zeta_i M_i^2
\|\bh_i - \widetilde{\bh}_i \|_2^2$ \\[0.6em]

\textcolor{myred}{\rule{1.2em}{0.8em}} &
Head-norm bound: $\displaystyle \frac{\|A\|_2^2}{2n}
\sum_{i=1}^n \zeta_i
\|\bh_i - \widetilde{\bh}_i \|_2^2$ \\[0.6em]

\textcolor{mypurple}{\rule{1.2em}{0.8em}} &
Global bound: $\displaystyle \frac{\|A\|_2^2}{4n}
\sum_{i=1}^n
\|\bh_i - \widetilde{\bh}_i \|_2^2$
\end{tabular}
};

\end{axis}
\end{tikzpicture}
\caption{
    Empirical verification of the KL divergence bound. Comparing both the global bounds as well as the data-dependent bounds. It can be seen that STARFISH recovery improves all bounds, and that the per-example one remains the tightest. 
    }   \label{fig:four_bars}
\end{figure}

\begin{figure}[H]
    \centering
    \includegraphics[width=0.85\textwidth]{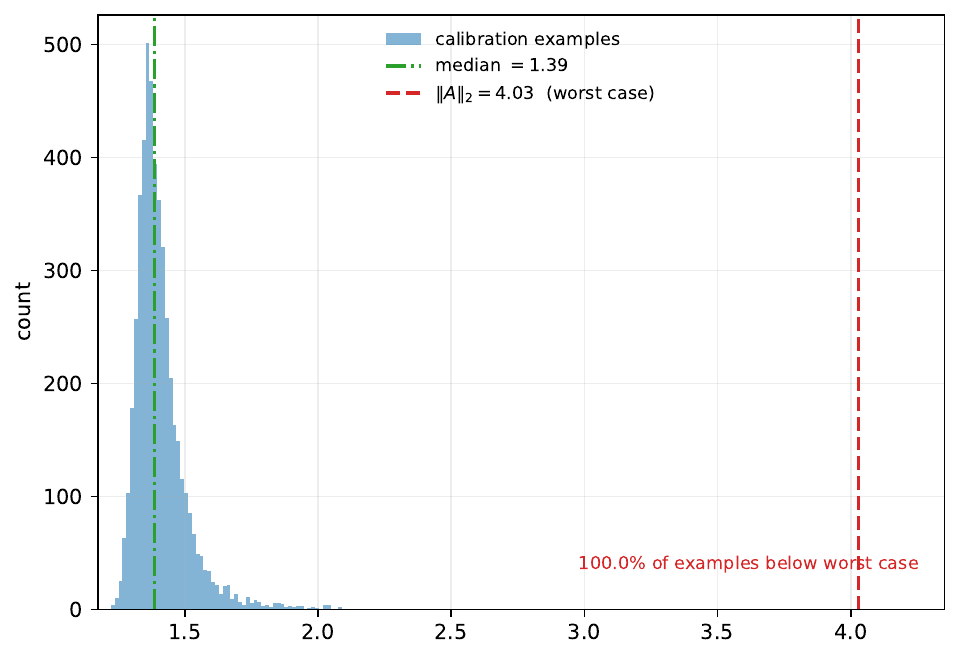} 
    
    \vspace{0.5cm} 
    
    \includegraphics[width=0.85\textwidth]{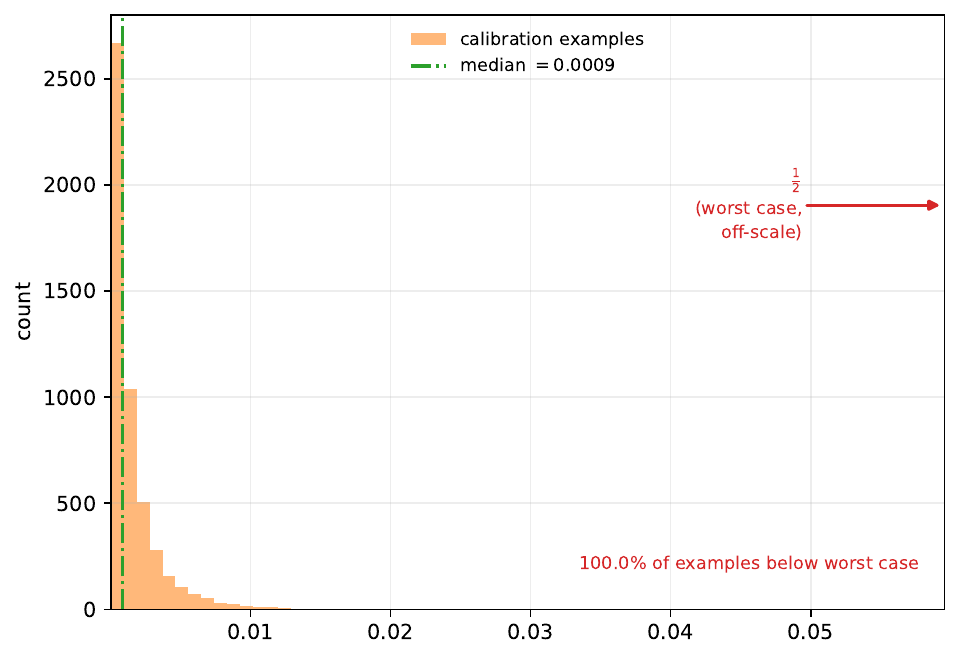} 
    
    \caption{Visualization of the data-dependent constants in the local KL bound compared to their global worst-case upper bounds.
    \textbf{Top}: Distribution of the example-wise head gains $M_i$ over the calibration examples, compared to the global bound $\|A\|_2$.
    \textbf{Bottom}: Distribution of the example-wise curvature constants $\zeta_i$, compared to the global smoothness bound $\frac{1}{2}$.
    In both cases, the realized data-dependent quantities are substantially smaller than their worst-case bounds, explaining why the local batch-dependent KL bound in \Cref{fig:four_bars} is much tighter than the corresponding global bound.
}
    \label{fig:zeta_mi_bound}
\end{figure}

\subsection{Recovery vs. fine-tuning: further experiment and details}\label[appendix]{app:recovery_vs_fine}

In \Cref{sec:experiments:loss_ablation}, we compared STARFISH to standard output-level recovery objectives on DeiT-B under unstructured pruning. Here, we provide an additional comparison to CE in structured pruning of various DeiT models and pruning scopes. 

\Cref{fig:lizard_deit_sparse_fine} compares STARFISH to CE (i.e., standard fine-tuning) after applying CORP structured pruning to several DeiT architectures and pruning scopes. Across all models and sparsity levels, STARFISH outperforms CE. The gap is especially pronounced at the hardest settings, the joint QK+MLP setting in the two leftmost graphs, where the label-based CE provides only a weak recovery signal. 

The CE and KL baselines reported in \Cref{tab:deitt_loss_ablation} and \Cref{fig:lizard_deit_sparse_fine} use images from the ImageNet training set, and have all other parameters such as batch size, epochs, learning rate etc. set exactly as the STARFISH recovery method, which are disclosed in \Cref{app:experimentaldetails}. The experiment reported in \Cref{tab:deitt_loss_ablation} reports recovery after unstructured pruning on the QKV, Out, and MLP parameters of a DeiT-B model. The STARFISH+KL objective combines the two losses with equal weights of $0.5$ and uses a calibration set drawn from the test set.

\begin{figure}[H]
\centering
\begin{subfigure}[t]{0.3\linewidth}
\centering
\begin{tikzpicture}
\begin{axis}[
    title={DeiT3-H (QK+MLP)},
    title style={font=\scriptsize},
    ylabel={Top-1 Accuracy (\%)},
    xlabel={Sparsity Level},
    xmin=0.49, xmax=0.86,
    ymax=87, ymin=0, yticklabel style={font=\scriptsize}, 
    xticklabel style={rotate=45, anchor=east, font=\scriptsize},
    xtick={0.5,0.63,0.69,0.75,0.8,0.85},
    legend to name=commonlegend,
    legend columns=5,
    legend style={draw=none, /tikz/every even column/.append style={column sep=0.5em}},
    ymajorgrids=true,
    grid style=dashed,
    width=\linewidth,
    height=5.5cm,
    thick,
    font=\scriptsize
]
\addplot[color=black, dashed, thick] coordinates {
    (0.49,84.97) (0.86,84.97)
};
\addlegendentry{Dense acc.}

\addplot[color=gray, mark=square, dashed] coordinates {
    (0.5,78.27) (0.63,64.38) (0.69,46.72) (0.75,25.90) (0.8,13.19) (0.85,5.88)
};
\addlegendentry{Pruned acc.}

\addplot[color=MidnightBlue, mark=triangle] coordinates {
    (0.5,66.78) (0.63,65.19) (0.69,64.12) (0.75,61.54) (0.8,58.73) (0.85,54.75)
};
\addlegendentry{CE}

\addplot[color=Salmon, mark=*, ultra thick] coordinates {
    (0.5,84.06) (0.63,83.35) (0.69,82.79) (0.75,81.78) (0.8,80.34) (0.85,78.24)
};
\addlegendentry{\textbf{STARFISH}}

\end{axis}
\end{tikzpicture}
\label{fig:lizard_deit_h_qkv_mlp_fine}
\end{subfigure}%
\hspace{-3em}
\begin{subfigure}[t]{0.3\linewidth}
\centering
\begin{tikzpicture}
\begin{axis}[
    title={DeiT-B (QK+MLP)},
    title style={font=\scriptsize},
    xlabel={Sparsity Level},
    xmin=0.49, xmax=0.86,
    ymax=87, ymin=0,
    xticklabel style={rotate=45, anchor=east, font=\scriptsize},
    xtick={0.5,0.63,0.69,0.75,0.8,0.85},
    ymajorgrids=true,
    grid style=dashed,
    width=\linewidth,
    height=5.5cm,
    thick,
    yticklabels=\empty,
    font=\scriptsize
]
\addplot[color=black, dashed, thick] coordinates {
    (0.49,81.73) (0.86,81.73)
};

\addplot[color=gray, mark=square, dashed] coordinates {
    (0.5,31.19) (0.63,8.30) (0.69,3.54) (0.75,1.42) (0.8,0.95) (0.85,0.62)
};

\addplot[color=MidnightBlue, mark=triangle] coordinates {
    (0.5,60.47) (0.63,56.16) (0.69,52.86) (0.75,47.87) (0.8,42.43) (0.85,35.40)
};

\addplot[color=Salmon, mark=*, ultra thick] coordinates {
    (0.5,75.77) (0.63,72.29) (0.69,70.12) (0.75,67.03) (0.8,64.03) (0.85,59.70)
};
\end{axis}

\end{tikzpicture}
\label{fig:lizard_deit_b_qkv_mlp_fine}
\end{subfigure}%
\hspace{-3.8em}
\begin{subfigure}[t]{0.3\linewidth}
\centering
\begin{tikzpicture}
\begin{axis}[
    title={DeiT-B (MLP)},
    title style={font=\scriptsize},
    xlabel={Sparsity Level},
    xmin=0.29, xmax=0.81,
    ymax=87, ymin=0,
    yticklabel style={font=\scriptsize},
    yticklabels=\empty,
    xticklabel style={rotate=45, anchor=east, font=\scriptsize},
    xtick={0.3,0.5,0.7,0.75,0.8},
    ymajorgrids=true,
    grid style=dashed,
    width=\linewidth,
    height=5.5cm,
    thick,
    font=\scriptsize
]
\addplot[color=black, dashed, thick] coordinates {
    (0.29,81.73) (0.81,81.73)
};

\addplot[color=gray, mark=square, dashed] coordinates {
    (0.3,71.88) (0.5,54.56) (0.7,14.50) (0.75,7.61) (0.8,4.06)
};

\addplot[color=MidnightBlue, mark=triangle] coordinates {
    (0.3,68.50) (0.5,67.06) (0.7,63.00) (0.75,61.11) (0.8,58.50)
};

\addplot[color=Salmon, mark=*, ultra thick] coordinates {
    (0.3,78.98) (0.5,75.91) (0.7,69.74) (0.75,67.15) (0.8,64.28)
};
\end{axis}
\end{tikzpicture}
\label{fig:lizard_deit_b_mlp_fine}
\end{subfigure}
\hspace{-3.8em}
\begin{subfigure}[t]{0.3\linewidth}
\centering
\begin{tikzpicture}
\begin{axis}[
    title={DeiT-S (MLP)},
    title style={font=\scriptsize},
    xlabel={Sparsity Level},
    xmin=0.29, xmax=0.81,
    ymax=87, ymin=0, yticklabel style={font=\scriptsize},   yticklabels=\empty,
    xticklabel style={rotate=45, anchor=east, font=\scriptsize},
    xtick={0.3,0.5,0.7,0.75,0.8},
    ymajorgrids=true,
    grid style=dashed,
    width=\linewidth,
    height=5.5cm,
    thick,
    font=\scriptsize
]
\addplot[color=black, dashed, thick] coordinates {
    (0.29,79.72) (0.81,79.72)
};

\addplot[color=MidnightBlue, mark=triangle] coordinates {
    (0.3,67.10) (0.5,62.82) (0.7,52.91) (0.75,47.95) (0.8,42.25)
};

\addplot[color=gray, mark=square, dashed] coordinates {
    (0.3,54.11) (0.5,45.51) (0.7,10.80) (0.75,3.84) (0.8,1.95)
};

\addplot[color=Salmon, mark=*, ultra thick] coordinates {
    (0.3,76.06) (0.5,72.54) (0.7,65.16) (0.75,61.68) (0.8,57.40)
};
\end{axis}
\end{tikzpicture}
\label{fig:lizard_deit_s_mlp_fine}
\end{subfigure}%

\centering
\ref*{commonlegend}

\caption{Comparison of the top-1 accuracy results after the STARFISH and fine-tuning healing processes, on different structured pruning scopes and sparsity levels for the DeiT-B, DeiT3-H, and DeiT-S architectures trained on ImageNet, performed following CORP pruning. One can see the STARFISH method achieves high results in all settings, especially in high-sparsity levels.} \label{fig:lizard_deit_sparse_fine}
\end{figure}

\section{Experimental details}\label[appendix]{app:experimentaldetails}

Extended experimental details for \Cref{sec:experiments}. 
\paragraph{Data splits.}
Unless otherwise stated, all calibration images used for STARFISH recovery are sampled from the unlabeled ImageNet-1K test split, streamed from Hugging Face.\footnote{\url{https://huggingface.co/datasets/ILSVRC/imagenet-1k}\label{fn:huggingface}}
No labels are used during STARFISH recovery. Evaluation is performed on the ImageNet-1K validation split, also streamed from Hugging Face.\footref{fn:huggingface} 
Thus, the calibration images used for representation alignment are disjoint from the labeled validation images used for reporting top-1 accuracy. 

\paragraph{Model checkpoints.}
All dense model checkpoints are loaded from publicly available pretrained weights. For ViT/B-16 and ViT/L-16, we use the torchvision checkpoints
\texttt{ViT\_B\_16\_Weights.IMAGENET1K\_V1} and
\texttt{ViT\_L\_16\_Weights.IMAGENET1K\_SWAG\_LINEAR\_V1}, respectively.\footnote{\url{https://docs.pytorch.org/vision/stable/models/generated/torchvision.models.vit_b_16.html}}\footnote{\url{https://docs.pytorch.org/vision/stable/models/generated/torchvision.models.vit_l_16.html}}
For DeiT and DeiT3 models, we use the pretrained checkpoints provided through \texttt{timm} by calling \texttt{timm.create\_model(model\_name, pretrained=True)}.\footnote{\url{https://huggingface.co/docs/timm/reference/models}}
Specifically, we use the non-distilled DeiT variants
\texttt{deit\_tiny\_patch16\_224},
\texttt{deit\_small\_patch16\_224}, and
\texttt{deit\_base\_patch16\_224}, and the DeiT3 variants
\texttt{deit3\_large\_patch16\_224}, and
\texttt{deit3\_huge\_patch14\_224}. For MobileNetV1, we use the dense STR ~\cite{kusupati2020soft} checkpoint provided by the WoodFisher repository.\footnote{\url{https://github.com/IST-DASLab/WoodFisher/tree/main/checkpoints}}

\paragraph{Trainable parameters during recovery.}
During STARFISH recovery, the pruning mask is kept fixed and we update only the surviving weights in the tensors that were designated as prunable by the corresponding pruning method. All parameters outside the prunable set are frozen at their dense-model values, and all masked-out weights remain fixed at zero throughout recovery. Equivalently, if $\mathcal P$ denotes the set of prunable parameter tensors and $\mathbf m$ is the binary pruning mask, then optimization is performed only over entries $\theta_j$ such that $\theta_j \in \mathcal P$ and $m_j=1$. Entries with $m_j=0$ are not updated, and parameters not belonging to $\mathcal P$ are excluded from the optimizer. In implementation, we enforce this constraint by applying the mask after each optimizer step, ensuring that the sparsity pattern is unchanged throughout recovery. 

\paragraph{Representation specifications.} 
For ViTs, that is the post-block token matrix -- taken after the attention and MLP sublayers, including their residual additions. For MobileNetV1, it is the output of the depthwise-separable convolution block,
taken after the point-wise convolution, BatchNorm, and ReLU. 

\paragraph{Recovery optimization.}
For DeiT-T, DeiT-S, DeiT-B, ViT/B-16, and MobileNetV1, STARFISH recovery is run for $10$ epochs. For DeiT3-L, DeiT3-H, and ViT/L-16, STARFISH is run for $5$ epochs. 
Recovery uses AdamW with initial learning rate $6 \times 10^{-4}$, decayed to $10^{-6}$ with a cosine learning-rate schedule. All results reported in the paper are averaged over three independent random seeds. Across seeds, randomness comes from the sampled calibration subset and the ordering of calibration batches. If not otherwise stated, we used $1{,}000$ examples for DeiT-T, $2{,}500$ examples for DeiT-S, $5{,}000$ examples for DeiT-B
and ViT/B-16, $7{,}000$ examples for DeiT3-L and ViT/L-16 and $9{,}000$ examples for DeiT3-H. We used a batch size of $32$ for all models except MobilenetV1, for which we used a batch size of $64$. 

\paragraph{CORP masks.}
For comparisons with CORP, we use the CORP recovery results as reported in the original CORP paper. To evaluate STARFISH under the same pruning setting, we need CORP-style pruning masks. Since the official CORP code was not publicly available at the time of our experiments, we implemented only the CORP mask-selection procedure and used it to generate masks for STARFISH recovery. The mask-selection calibration set has the same size as the calibration set used later for STARFISH recovery, but is sampled from the ImageNet-1K training split streamed from Hugging Face.\footref{fn:huggingface} After mask selection, STARFISH recovery is performed using the unlabeled ImageNet-1K test-split calibration images described above. The pruned accuracies of our CORP-mask implementation and the STARFISH recovery results are averaged over three independent random seeds.

\paragraph{ImageNet-mini evaluation for SNOWS comparisons.}
All SNOWS results reported in prior work are evaluated on ImageNet-1K-mini rather than on the full ImageNet-1K validation split. Therefore, for the comparisons against SNOWS in Panel B of Table~\ref{tab:structured}, we also evaluate STARFISH on ImageNet-1K-mini, using the same evaluation setting in order to ensure a fair comparison. All other ImageNet-1K results in the paper are evaluated on the full ImageNet-1K validation split unless explicitly stated otherwise.

\subsection{Ablation study: cosine-similarity is better than MSE}\label[appendix]{app:ablation_mse}

STARFISH uses cosine similarity to align the dense and pruned intermediate representations. 
A natural alternative is to replace the cosine loss with a mean-squared-error (MSE) loss between the same representations. 
In \Cref{tab:mse_ablation}, we compare these two objectives for recovering a pruned DeiT-B model pruned under unstructured sparsity on QKV+Out+MLP. Empirically, cosine alignment works better than MSE alignment. The difference is most pronounced at $0.8$-sparsity, with a difference in final accuracy of $2.7$ points. Accordingly, we use cosine alignment as the default STARFISH objective throughout the paper.
\begin{table}[H]
\centering
\caption{Comparison of top-1 accuracy results of using cosine similarity vs. MSE as an objective for recovering a pruned DeiT-B at different sparsity levels. We show the cosine similarity loss presents better recovery results}
\label{tab:mse_ablation}
\begin{tabular}{@{}l cccc @{}}
\toprule
& \multicolumn{4}{c}{\textbf{Dense Acc.: $81.73\%$}} \\
\cmidrule(l){2-5}
\textbf{Method/Sparsity} & \textbf{0.5} & \textbf{0.6} & \textbf{0.7} & \textbf{0.8} \\
\midrule
Pruned Acc. & 74.59  & 56.16  & 20.02  & 0.83 \\
\midrule
MSE & 81.15  & 80.18  & 78.17  & 70.56  \\
Cosine Similarity & \textbf{81.34} & \textbf{ 80.59} & \textbf{78.91} & \textbf{73.26} \\
\bottomrule
\end{tabular}
\end{table}

\subsection{Ablation study: calibration data source}\label[appendix]{app:calib_data_train}

STARFISH only requires unlabeled calibration inputs, and does not rely on access to the original training set. 
To verify that the recovery is not limited to the source of the calibration data, we compare using calibration images sampled from the training set versus the test set, while keeping the calibration size and recovery procedure fixed.

\Cref{tab:calib_train_vs_test} reports the results for both structured DeiT pruning, pruned using CORP, and $2{:}4$ semi-structured ViT pruning, pruned using MP, at $0.5$ sparsity. 
The recovered accuracies obtained from train and test calibration images are very similar, with slightly better results for using the test set as the calibration source.    

\begin{table}[H]
\centering
\caption{
Top-1 accuracy after STARFISH recovery when the unlabeled calibration examples are sampled from the training set versus a held-out test set at $0.5$ sparsity.
\textbf{Panel A:} Structured pruning results on DeiT models.
\textbf{Panel B:} $2{:}4$ semi-structured pruning results on ViT models.
Best result within each model is bolded. Showing the calibration set from the unlabeled test set improves recovery.
}\label{tab:calib_train_vs_test}
{
\begin{tabular}{@{}lcccc@{}}
\toprule
\makecell[l]{Model} 
& \makecell{Pruning\\Scope}  
& \makecell{Calib. Set\\Source} 
& \makecell{Calib.\\Size} 
& \makecell{Recovered\\Acc.} \\

\midrule
\multicolumn{5}{@{}l}{\textbf{Panel A: DeiT setting ($0.5$ structured sparsity)}}\\
\midrule

\multirow{2}{*}{\shortstack[l]{DeiT-T \\(Dense Acc. 72.01)}}
& \multirow{2}{*}{\shortstack{MLP+QK\\(Pruned Acc. 7.57)}}
  &  Train & \multirow{2}{*}{1,000}  & 58.75 \\
& &  Test &  &\textbf{59.01} \\
\midrule

\multirow{2}{*}{\shortstack[l]{DeiT-S \\(Dense Acc. 79.72)}}
& \multirow{2}{*}{\shortstack{MLP+QK\\(Pruned Acc. 17.32)}}
  &  Train & \multirow{2}{*}{2,500}   & 71.25 \\
  & & Test &  & \textbf{71.59} \\
\midrule

\multirow{2}{*}{\shortstack[l]{DeiT-B \\ (Dense Acc. 81.73)}}
& \multirow{2}{*}{\shortstack{MLP+QK\\(Pruned Acc. 31.19)}}
  &  Train & \multirow{2}{*}{5,000}  & 74.95 \\
  &  & Test & & \textbf{75.77} \\
\midrule

\multirow{2}{*}{\shortstack[l]{DeiT3-L \\  (Dense Acc. 84.58)}}
& \multirow{2}{*}{\shortstack{MLP+QK\\(Pruned Acc. 72.09)}}
  &  Train & \multirow{2}{*}{7,000}  & 82.11 \\
& & Test & & \textbf{82.69} \\
\midrule

\multirow{2}{*}{\shortstack[l]{DeiT3-H \\ (Dense Acc. 84.97)}}
& \multirow{2}{*}{\shortstack{MLP+QK\\(Pruned Acc. 78.27)}}
  &  Train & \multirow{2}{*}{9,000} & 83.90 \\
& & Test & &\textbf{84.06}\\
\midrule

\multicolumn{5}{@{}l}{\textbf{Panel B: ViT setting ($2{:}4$ semi-structured sparsity)}} \\
\midrule

\multirow{2}{*}{\shortstack[l]{ViT/B-16 \\ (Dense Acc. 80.40)}}
& \multirow{2}{*}{\shortstack{QKV+Out+MLP\\(Pruned Acc. 70.86)}}
  & Train  & \multirow{2}{*}{5,000} & 79.05  \\
& & Test   &  & \textbf{79.07}  \\

\midrule
\multirow{2}{*}{\shortstack[l]{ViT/L-16 \\ (Dense Acc. 84.20)}}
& \multirow{2}{*}{\shortstack{MLP\\(Pruned Acc. 0.20)}}
  & Train  & \multirow{2}{*}{7,000} & 70.88  \\
& & Test   &  & \textbf{72.46}  \\
\bottomrule
\end{tabular}
}
\end{table}

\subsection{Runtime}\label[appendix]{app:runtime}

STARFISH provides a direct runtime--accuracy tradeoff when using an even smaller calibration set. 
In \Cref{tab:runtime_deit3}, we recover DeiT3-L and DeiT3-H models pruned to $0.8$-sparsity using CORP structured masks on the QK+MLP pruning scope, and vary only the number of calibration examples used by STARFISH. It is no surprise that, for smaller calibration sets, runtime scales approximately linearly with the calibration set size, since the number of recovery epochs and batch size are fixed across all runs. 

Even very small calibration sets already provide substantial recovery. 
For example, on DeiT3-H, STARFISH with only 1,000 calibration examples runs in about 16 minutes and recovers almost $86\%$ of the original dense model accuracy. Using larger calibration sets further improves recovery: with 9,000 examples, STARFISH reaches $94.55\%$ accuracy retention in the same DeiT3-H setting. 
This demonstrates that STARFISH can be used either as a very fast recovery method with modest calibration data, or as a stronger recovery method when more calibration time is available. The runtime results were measured using a single NVIDIA A10 GPU with 24GB memory. 

\begin{table}[t]
\centering
\caption{
Runtime and final top-1 accuracy for STARFISH recovery on DeiT3-L and DeiT3-H under different calibration set sizes. 
Runtime is reported as mean $\pm$ sample standard deviation over three runs. It can be seen that runtime scales almost linearly, and that even using small calibration sets STARFISH recovers to a substantial accuracy.
}
\label{tab:runtime_deit3}
{
\begin{tabular}{@{}lcccc@{}}
\toprule
\makecell[l]{Model} 
& \makecell{Calib.\\Size} 
& \makecell{Runtime} 
& \makecell{Recovered\\Acc.} 
& \makecell{Retention\\(\%)} 
\\
\midrule

\multirow{3}{*}{\shortstack[l]{DeiT3-L \\ (Dense Acc. 84.58)}}
& 500   & 3m 10s $\pm$ 10s & 54.98 & 65.00\% \\ 
& 1,000 & 5m 41s $\pm$ 12s & 63.03 & 74.52\% \\
& 7,000 & 34m 32s $\pm$ 14s & 75.4 & 89.15\% 
\\

\midrule

\multirow{3}{*}{\shortstack[l]{DeiT3-H \\ (Dense Acc. 84.97)}}
& 500   & 8m 24s $\pm$ 2s & 68.63 & 80.77\% \\ 
& 1,000 & 16m 7s $\pm$ 9s & 73.06 & 85.98\% \\
& 9,000 & 2h 14m 57s $\pm$ 22s & 80.34 & 94.55\% 
\\
\bottomrule
\end{tabular}
}
\end{table}

\subsection{Raw data}\label[appendix]{app:raw_data}

For completeness, we report the raw numerical values used in the plots in the main paper. 
The corresponding trends are discussed in the main text; here, the tables are intended only to make the plotted results explicit and reproducible. 
\Cref{tab:mlp_all_sparsities} and \Cref{tab:mlp_qkv_all_sparsities} provide the raw data for the structured DeiT pruning experiments shown in \Cref{fig:lizard_high_sparsity_corp}, while \Cref{tab:mobilev1_raw_sparsity} provides the raw data for the MobileNetV1 sparsity experiment shown in \Cref{fig:lizard_high_sparsity_mobile}.

\begin{table}[H]
\centering
\caption{
Raw MobileNetV1 top-1 accuracy values across sparsity levels for the experiment shown in \Cref{fig:lizard_high_sparsity_mobile}. 
$\Delta$ denotes the change in top-1 accuracy relative to the corresponding dense baseline.
}
\label{tab:mobilev1_raw_sparsity}
\begin{tabular}{@{}lccccccccc@{}}
\toprule
\multirow{2}{*}{Method}
& \multirow{2}{*}{\makecell{Dense\\Acc.}}
& \multicolumn{2}{c}{$0.5$-Sparsity}
& \multicolumn{2}{c}{$0.6$-Sparsity}
& \multicolumn{2}{c}{$0.7$-Sparsity}
& \multicolumn{2}{c}{$0.8$-Sparsity} \\
\cmidrule(lr){3-4}
\cmidrule(lr){5-6}
\cmidrule(lr){7-8}
\cmidrule(l){9-10}
& 
& Acc. & $\Delta$
& Acc. & $\Delta$
& Acc. & $\Delta$
& Acc. & $\Delta$ \\
\midrule

MP 
& 71.95 
& 63.83 & $-8.12$
& 47.15 & $-24.80$
& 11.68 & $-60.27$
& 0.43  & $-71.52$ \\

WF 
& 71.95
& 68.91 & $-3.04$
& 60.90 & $-11.05$
& 29.36 & $-42.59$
& 0.24  & $-71.71$ \\

CBS 
& 71.95
& 70.21 & $-1.74$
& 66.37 & $-5.58$
& 55.11 & $-16.84$
& 16.38 & $-55.57$ \\

CHITA 
& 71.95
& 70.42 & $-1.53$
& 67.30 & $-4.65$
& 59.40 & $-12.55$
& 29.78 & $-42.17$ \\

FALCON 
& 71.95
& 70.35 & $-1.60$
& 67.18 & $-4.77$
& 58.40 & $-13.55$
& 25.82 & $-46.13$ \\

SNOWS 
& 70.89
& 70.10 & $-0.79$
& 68.70 & $-2.19$
& 65.28 & $-5.61$
& 54.83 & $-16.06$ \\

\textbf{STARFISH}
& 71.95
& \textbf{71.12} & $-0.83$
& \textbf{70.10} & $-1.85$
& \textbf{67.67} & $-4.28$
& \textbf{60.66} & $-11.29$ \\

\bottomrule
\end{tabular}
\end{table}

\clearpage

\begin{table}[H]
\centering
\setlength{\tabcolsep}{4pt}
\caption{
Raw top-1 accuracy values for the structured MLP-pruning DeiT experiments shown in the right two panels of \Cref{fig:lizard_high_sparsity_corp}. 
We report the pruned accuracy, recovered accuracy, accuracy retention relative to the dense model, and the retention improvement of STARFISH over CORP, in the settings in which CORP is reported. 
Best reported recovered result within each model and sparsity level is bolded.
}
\label{tab:mlp_all_sparsities}
\begin{tabular}{@{}lccccccc@{}}
\toprule
\makecell[l]{Model}
& \makecell{Sparsity\\Level}
& \makecell{Pruned\\Acc.}
& \makecell{Method}
& \makecell{Calib.\\Size}
& \makecell{Recovered\\Acc.}
& \makecell{Retention\\(\%)}
& \makecell{Improvement\\($\Delta$\%)} \\
\midrule

\multirow{10}{*}{\shortstack[l]{DeiT-S \\(Dense Acc. 79.72)}}
& \multirow{2}{*}{$0.30$}
& \multirow{2}{*}{54.11}
  & CORP   & --    & 73.88 & 92.67\% & \multirow{2}{*}{\sgreen{\textbf{2.73}}} \\
& & & STARFISH & 2,500 & \textbf{76.06} & \textbf{95.41\%} & \\
\cmidrule(l){2-8}

& \multirow{2}{*}{$0.50$}
& \multirow{2}{*}{45.51}
  & CORP   & --    & 65.85 & 82.60\% & \multirow{2}{*}{\sgreen{\textbf{8.39}}} \\
& & & STARFISH & 2,500 & \textbf{72.54} & \textbf{90.99\%} & \\
\cmidrule(l){2-8}

& \multirow{2}{*}{$0.70$}
& \multirow{2}{*}{10.80}
  & CORP   & --    & 49.98 & 62.69\% & \multirow{2}{*}{\sgreen{\textbf{19.04}}} \\
& & & STARFISH & 2,500 & \textbf{65.16} & \textbf{81.74\%} & \\
\cmidrule(l){2-8}

& \multirow{2}{*}{$0.75$}
& \multirow{2}{*}{3.84}
  & CORP   & --    & --    & --       & \multirow{2}{*}{--} \\
& & & STARFISH & 2,500 & \textbf{61.68} & \textbf{77.37\%} & \\
\cmidrule(l){2-8}

& \multirow{2}{*}{$0.80$}
& \multirow{2}{*}{1.95}
  & CORP   & --    & --    & --       & \multirow{2}{*}{--} \\
& & & STARFISH & 2,500 & \textbf{57.40} & \textbf{72.00\%} & \\

\midrule

\multirow{10}{*}{\shortstack[l]{DeiT-B \\(Dense Acc. 81.73)}}
& \multirow{2}{*}{$0.30$}
& \multirow{2}{*}{71.88}
  & CORP   & --    & 75.65 & 92.56\% & \multirow{2}{*}{\sgreen{\textbf{4.07}}} \\
& & & STARFISH & 5,000 & \textbf{78.98} & \textbf{96.64\%} & \\
\cmidrule(l){2-8}

& \multirow{2}{*}{$0.50$}
& \multirow{2}{*}{54.56}
  & CORP   & --    & 68.67 & 84.02\% & \multirow{2}{*}{\sgreen{\textbf{8.86}}} \\
& & & STARFISH & 5,000 & \textbf{75.91} & \textbf{92.88\%} & \\
\cmidrule(l){2-8}

& \multirow{2}{*}{$0.70$}
& \multirow{2}{*}{14.50}
  & CORP   & --    & 56.46 & 69.08\% & \multirow{2}{*}{\sgreen{\textbf{16.25}}} \\
& & & STARFISH & 5,000 & \textbf{69.74} & \textbf{85.33\%} & \\
\cmidrule(l){2-8}

& \multirow{2}{*}{$0.75$}
& \multirow{2}{*}{7.61}
  & CORP   & --    & --    & --       & \multirow{2}{*}{--} \\
& & & STARFISH & 5,000 & \textbf{67.15} & \textbf{82.16\%} & \\
\cmidrule(l){2-8}

& \multirow{2}{*}{$0.80$}
& \multirow{2}{*}{4.06}
  & CORP   & --    & --    & --       & \multirow{2}{*}{--} \\
& & & STARFISH & 5,000 & \textbf{64.28} & \textbf{78.65\%} & \\

\bottomrule
\end{tabular}
\end{table}

\begin{table}[H]
\centering
\setlength{\tabcolsep}{4pt}
\caption{
Raw top-1 accuracy values for the structured QK+MLP-pruning DeiT experiments shown in the left two panels of \Cref{fig:lizard_high_sparsity_corp}. 
We report the pruned accuracy, recovered accuracy, accuracy retention relative to the dense model, and the retention improvement of STARFISH over CORP, in the settings in which CORP is reported. 
Best reported recovered result within each model and sparsity level is bolded.}
\label{tab:mlp_qkv_all_sparsities}
\begin{tabular}{@{}lccccccc@{}}
\toprule
\makecell[l]{Model}
& \makecell{Sparsity\\Level}
& \makecell{Pruned\\Acc.}
& \makecell{Method}
& \makecell{Calib.\\Size}
& \makecell{Recovered\\Acc.}
& \makecell{Retention\\(\%)}
& \makecell{Improvement\\($\Delta$\%)} \\
\midrule

\multirow{12}{*}{\shortstack[l]{DeiT-B \\(Dense Acc. 81.73)}}
& \multirow{2}{*}{$0.50$}
& \multirow{2}{*}{31.19}
  & CORP   & --    & 67.00 & 81.98\% & \multirow{2}{*}{\sgreen{\textbf{10.73}}} \\
& & & STARFISH & 5,000 & \textbf{75.77} & \textbf{92.71\%} & \\
\cmidrule(l){2-8}

& \multirow{2}{*}{$0.63$}
& \multirow{2}{*}{8.30}
  & CORP   & --    & 55.90 & 68.40\% & \multirow{2}{*}{\sgreen{\textbf{20.05}}} \\
& & & STARFISH & 5,000 & \textbf{72.29} & \textbf{88.45\%} & \\
\cmidrule(l){2-8}

& \multirow{2}{*}{$0.69$}
& \multirow{2}{*}{3.54}
  & CORP   & --    & 46.60 & 57.02\% & \multirow{2}{*}{\sgreen{\textbf{28.78}}} \\
& & & STARFISH & 5,000 & \textbf{70.12} & \textbf{85.79\%} & \\
\cmidrule(l){2-8}

& \multirow{2}{*}{$0.75$}
& \multirow{2}{*}{1.42}
  & CORP   & --    & 32.70 & 40.01\% & \multirow{2}{*}{\sgreen{\textbf{42.00}}} \\
& & & STARFISH & 5,000 & \textbf{67.03} & \textbf{82.01\%} & \\
\cmidrule(l){2-8}

& \multirow{2}{*}{$0.80$}
& \multirow{2}{*}{0.95}
  & CORP   & --    & --    & --       & \multirow{2}{*}{--} \\
& & & STARFISH & 5,000 & \textbf{64.03} & \textbf{78.34\%} & \\
\cmidrule(l){2-8}

& \multirow{2}{*}{$0.85$}
& \multirow{2}{*}{0.62}
  & CORP   & --    & --    & --       & \multirow{2}{*}{--} \\
& & & STARFISH & 5,000 & \textbf{59.70} & \textbf{73.05\%} & \\

\midrule

\multirow{12}{*}{\shortstack[l]{DeiT3-H \\(Dense Acc. 84.97)}}
& \multirow{2}{*}{$0.50$}
& \multirow{2}{*}{78.27}
  & CORP   & --    & 82.80 & 97.45\% & \multirow{2}{*}{\sgreen{\textbf{1.48}}} \\
& & & STARFISH & 9,000 & \textbf{84.06} & \textbf{98.93\%} & \\
\cmidrule(l){2-8}

& \multirow{2}{*}{$0.63$}
& \multirow{2}{*}{64.38}
  & CORP   & --    & 79.10 & 93.09\% & \multirow{2}{*}{\sgreen{\textbf{5.00}}} \\
& & & STARFISH & 9,000 & \textbf{83.35} & \textbf{98.09\%} & \\
\cmidrule(l){2-8}

& \multirow{2}{*}{$0.69$}
& \multirow{2}{*}{46.72}
  & CORP   & --    & 75.10 & 88.38\% & \multirow{2}{*}{\sgreen{\textbf{9.05}}} \\
& & & STARFISH & 9,000 & \textbf{82.79} & \textbf{97.43\%} & \\
\cmidrule(l){2-8}

& \multirow{2}{*}{$0.75$}
& \multirow{2}{*}{25.90}
  & CORP   & --    & 67.20 & 79.09\% & \multirow{2}{*}{\sgreen{\textbf{17.16}}} \\
& & & STARFISH & 9,000 & \textbf{81.78} & \textbf{96.25\%} & \\
\cmidrule(l){2-8}

& \multirow{2}{*}{$0.80$}
& \multirow{2}{*}{13.19}
  & CORP   & --    & --    & --       & \multirow{2}{*}{--} \\
& & & STARFISH & 9,000 & \textbf{80.34} & \textbf{94.55\%} & \\
\cmidrule(l){2-8}

& \multirow{2}{*}{$0.85$}
& \multirow{2}{*}{5.88}
  & CORP   & --    & --    & --       & \multirow{2}{*}{--} \\
& & & STARFISH & 9,000 & \textbf{78.24} & \textbf{92.08\%} & \\

\bottomrule
\end{tabular}
\end{table}

%% file: main.bbl
\begin{thebibliography}{37}
\providecommand{\natexlab}[1]{#1}
\providecommand{\url}[1]{\texttt{#1}}
\expandafter\ifx\csname urlstyle\endcsname\relax
  \providecommand{\doi}[1]{doi: #1}\else
  \providecommand{\doi}{doi: \begingroup \urlstyle{rm}\Url}\fi

\bibitem[Benbaki et~al.(2023)Benbaki, Chen, Meng, Hazimeh, Ponomareva, Zhao, and Mazumder]{benbaki2023fast}
Riade Benbaki, Wenyu Chen, Xiang Meng, Hussein Hazimeh, Natalia Ponomareva, Zhe Zhao, and Rahul Mazumder.
\newblock Fast as chita: Neural network pruning with combinatorial optimization.
\newblock In \emph{International Conference on Machine Learning}, pages 2031--2049. PMLR, 2023.

\bibitem[Chen et~al.(2025)Chen, Ma, Huang, Wang, Wang, Sun, Huang, and John]{chen2025optimal}
Shaowu Chen, Wei Ma, Binhua Huang, Qingyuan Wang, Guoxin Wang, Weize Sun, Lei Huang, and Deepu John.
\newblock Optimal brain connection: Towards efficient structural pruning.
\newblock \emph{arXiv preprint arXiv:2508.05521}, 2025.

\bibitem[Chen et~al.(2020)Chen, Kornblith, Norouzi, and Hinton]{chen2020simple}
Ting Chen, Simon Kornblith, Mohammad Norouzi, and Geoffrey Hinton.
\newblock A simple framework for contrastive learning of visual representations.
\newblock In \emph{International Conference on Machine Learning}, pages 1597--1607. PMLR, 2020.

\bibitem[Deng et~al.(2009)Deng, Dong, Socher, Li, Li, and Fei-Fei]{imagenet}
Jia Deng, Wei Dong, Richard Socher, Li-Jia Li, Kai Li, and Li~Fei-Fei.
\newblock Imagenet: A large-scale hierarchical image database.
\newblock In \emph{2009 IEEE Conference on Computer Vision and Pattern Recognition}, pages 248--255, 2009.

\bibitem[Dosovitskiy et~al.(2020)Dosovitskiy, Beyer, Kolesnikov, Weissenborn, Zhai, Unterthiner, Dehghani, Minderer, Heigold, Gelly, et~al.]{dosovitskiy2020image}
Alexey Dosovitskiy, Lucas Beyer, Alexander Kolesnikov, Dirk Weissenborn, Xiaohua Zhai, Thomas Unterthiner, Mostafa Dehghani, Matthias Minderer, Georg Heigold, Sylvain Gelly, et~al.
\newblock An image is worth 16x16 words: Transformers for image recognition at scale.
\newblock \emph{arXiv preprint arXiv:2010.11929}, 2020.

\bibitem[Ericsson et~al.(2022)Ericsson, Gouk, Loy, and Hospedales]{ericsson2022self}
Linus Ericsson, Henry Gouk, Chen~Change Loy, and Timothy~M Hospedales.
\newblock Self-supervised representation learning: Introduction, advances, and challenges.
\newblock \emph{IEEE Signal Processing Magazine}, 39\penalty0 (3):\penalty0 42--62, 2022.

\bibitem[Frankle and Carbin(2018)]{frankle2018lottery}
Jonathan Frankle and Michael Carbin.
\newblock The lottery ticket hypothesis: Finding sparse, trainable neural networks.
\newblock \emph{arXiv preprint arXiv:1803.03635}, 2018.

\bibitem[Frankle et~al.(2019)Frankle, Dziugaite, Roy, and Carbin]{frankle2019stabilizing}
Jonathan Frankle, Gintare~Karolina Dziugaite, Daniel~M Roy, and Michael Carbin.
\newblock Stabilizing the lottery ticket hypothesis.
\newblock \emph{arXiv preprint arXiv:1903.01611}, 2019.

\bibitem[Frantar and Alistarh(2023)]{frantar2023sparsegpt}
Elias Frantar and Dan Alistarh.
\newblock Sparsegpt: Massive language models can be accurately pruned in one-shot.
\newblock In \emph{International Conference on Machine Learning}, pages 10323--10337. PMLR, 2023.

\bibitem[Grill et~al.(2020)Grill, Strub, Altch{\'e}, Tallec, Richemond, Buchatskaya, Doersch, Avila~Pires, Guo, Gheshlaghi~Azar, et~al.]{grill2020bootstrap}
Jean-Bastien Grill, Florian Strub, Florent Altch{\'e}, Corentin Tallec, Pierre Richemond, Elena Buchatskaya, Carl Doersch, Bernardo Avila~Pires, Zhaohan Guo, Mohammad Gheshlaghi~Azar, et~al.
\newblock Bootstrap your own latent-a new approach to self-supervised learning.
\newblock \emph{Advances in Neural Information Processing Systems}, 33:\penalty0 21271--21284, 2020.

\bibitem[Han et~al.(2015)Han, Pool, Tran, and Dally]{han2015learning}
Song Han, Jeff Pool, John Tran, and William Dally.
\newblock Learning both weights and connections for efficient neural network.
\newblock \emph{Advances in Neural Information Processing Systems}, 28, 2015.

\bibitem[Hassibi and Stork(1992)]{hassibi1993second}
Babak Hassibi and David~G. Stork.
\newblock Second order derivatives for network pruning: Optimal brain surgeon.
\newblock In \emph{Advances in Neural Information Processing Systems}, volume~5, 1992.

\bibitem[Hinton et~al.(2015)Hinton, Vinyals, and Dean]{hinton2015distilling}
Geoffrey Hinton, Oriol Vinyals, and Jeff Dean.
\newblock Distilling the knowledge in a neural network.
\newblock \emph{arXiv preprint arXiv:1503.02531}, 2015.

\bibitem[Howard et~al.(2017)Howard, Zhu, Chen, Kalenichenko, Wang, Weyand, Andreetto, and Adam]{howard2017mobilenets}
Andrew~G Howard, Menglong Zhu, Bo~Chen, Dmitry Kalenichenko, Weijun Wang, Tobias Weyand, Marco Andreetto, and Hartwig Adam.
\newblock Mobilenets: Efficient convolutional neural networks for mobile vision applications.
\newblock \emph{arXiv preprint arXiv:1704.04861}, 2017.

\bibitem[Kornblith et~al.(2019)Kornblith, Norouzi, Lee, and Hinton]{kornblith2019similarity}
Simon Kornblith, Mohammad Norouzi, Honglak Lee, and Geoffrey Hinton.
\newblock Similarity of neural network representations revisited.
\newblock In \emph{International Conference on Machine Learning}, pages 3519--3529. PMLR, 2019.

\bibitem[Kusupati et~al.(2020)Kusupati, Ramanujan, Somani, Wortsman, Jain, Kakade, and Farhadi]{kusupati2020soft}
Aditya Kusupati, Vivek Ramanujan, Raghav Somani, Mitchell Wortsman, Prateek Jain, Sham Kakade, and Ali Farhadi.
\newblock Soft threshold weight reparameterization for learnable sparsity.
\newblock In \emph{International conference on machine learning}, pages 5544--5555. PMLR, 2020.

\bibitem[Kuznedelev et~al.(2023)Kuznedelev, Kurti{\'c}, Frantar, and Alistarh]{kuznedelev2023cap}
Denis Kuznedelev, Eldar Kurti{\'c}, Elias Frantar, and Dan Alistarh.
\newblock Cap: Correlation-aware pruning for highly-accurate sparse vision models.
\newblock \emph{Advances in Neural Information Processing Systems}, 36:\penalty0 28805--28831, 2023.

\bibitem[Kwon et~al.(2022)Kwon, Kim, Mahoney, Hassoun, Keutzer, and Gholami]{kwon2022fast}
Woosuk Kwon, Sehoon Kim, Michael~W Mahoney, Joseph Hassoun, Kurt Keutzer, and Amir Gholami.
\newblock A fast post-training pruning framework for transformers.
\newblock \emph{Advances in Neural Information Processing Systems}, 35:\penalty0 24101--24116, 2022.

\bibitem[LeCun et~al.(1989)LeCun, Denker, and Solla]{lecun1989optimal}
Yann LeCun, John Denker, and Sara Solla.
\newblock Optimal brain damage.
\newblock \emph{Advances in Neural Information Processing Systems}, 2:\penalty0 598--605, 1989.

\bibitem[Lucas and Mazumder(2024)]{lucas2024preserving}
Ryan Lucas and Rahul Mazumder.
\newblock Preserving deep representations in one-shot pruning: A hessian-free second-order optimization framework.
\newblock \emph{arXiv preprint arXiv:2411.18376}, 2024.

\bibitem[Malach et~al.(2020)Malach, Yehudai, Shalev-Schwartz, and Shamir]{malach2020proving}
Eran Malach, Gilad Yehudai, Shai Shalev-Schwartz, and Ohad Shamir.
\newblock Proving the lottery ticket hypothesis: Pruning is all you need.
\newblock In \emph{International Conference on Machine Learning}, pages 6682--6691. PMLR, 2020.

\bibitem[Meng et~al.(2024)Meng, Chen, Benbaki, and Mazumder]{meng2024falcon}
Xiang Meng, Wenyu Chen, Riade Benbaki, and Rahul Mazumder.
\newblock Falcon: Flop-aware combinatorial optimization for neural network pruning.
\newblock In \emph{International Conference on Artificial Intelligence and Statistics}, pages 4384--4392. PMLR, 2024.

\bibitem[Nair(2025)]{nair2025softmax}
Pravin Nair.
\newblock Softmax is $1/2$-lipschitz: A tight bound across all $\ell_p$ norms.
\newblock \emph{arXiv preprint arXiv:2510.23012}, 2025.

\bibitem[O'Shea and Nash(2015)]{cnn}
Keiron O'Shea and Ryan Nash.
\newblock An introduction to convolutional neural networks.
\newblock \emph{arXiv preprint arXiv:1511.08458}, 2015.

\bibitem[Ramanujan et~al.(2020)Ramanujan, Wortsman, Kembhavi, Farhadi, and Rastegari]{ramanujan2020s}
Vivek Ramanujan, Mitchell Wortsman, Aniruddha Kembhavi, Ali Farhadi, and Mohammad Rastegari.
\newblock What's hidden in a randomly weighted neural network?
\newblock In \emph{Proceedings of the IEEE/CVF conference on computer vision and pattern recognition}, pages 11893--11902, 2020.

\bibitem[Renda et~al.(2020)Renda, Frankle, and Carbin]{renda2020comparing}
Alex Renda, Jonathan Frankle, and Michael Carbin.
\newblock Comparing rewinding and fine-tuning in neural network pruning.
\newblock \emph{arXiv preprint arXiv:2003.02389}, 2020.

\bibitem[Romero et~al.(2015)Romero, Ballas, Kahou, Chassang, Gatta, and Bengio]{romero2015fitnetshintsdeepnets}
Adriana Romero, Nicolas Ballas, Samira~Ebrahimi Kahou, Antoine Chassang, Carlo Gatta, and Yoshua Bengio.
\newblock Fitnets: Hints for thin deep nets, 2015.
\newblock URL \url{https://arxiv.org/abs/1412.6550}.

\bibitem[Sanh et~al.(2019)Sanh, Debut, Chaumond, and Wolf]{sanh2019distilbert}
Victor Sanh, Lysandre Debut, Julien Chaumond, and Thomas Wolf.
\newblock Distilbert, a distilled version of bert: smaller, faster, cheaper and lighter.
\newblock \emph{arXiv preprint arXiv:1910.01108}, 2019.

\bibitem[Singh and Alistarh(2020)]{singh2020woodfisher}
Sidak~Pal Singh and Dan Alistarh.
\newblock Woodfisher: Efficient second-order approximation for neural network compression.
\newblock \emph{Advances in Neural Information Processing Systems}, 33:\penalty0 18098--18109, 2020.

\bibitem[Tian et~al.(2020)Tian, Krishnan, and Isola]{tian2020contrastive}
Yonglong Tian, Dilip Krishnan, and Phillip Isola.
\newblock Contrastive representation distillation.
\newblock In \emph{International Conference on Learning Representations (ICLR)}, 2020.

\bibitem[Touvron et~al.(2021)Touvron, Cord, Douze, Massa, Sablayrolles, and J{\'e}gou]{touvron2021training}
Hugo Touvron, Matthieu Cord, Matthijs Douze, Francisco Massa, Alexandre Sablayrolles, and Herv{\'e} J{\'e}gou.
\newblock Training data-efficient image transformers \& distillation through attention.
\newblock In \emph{International Conference on Machine Learning}, pages 10347--10357. PMLR, 2021.

\bibitem[Touvron et~al.(2022)Touvron, Cord, and J{\'e}gou]{touvron2022deit}
Hugo Touvron, Matthieu Cord, and Herv{\'e} J{\'e}gou.
\newblock Deit iii: Revenge of the vit.
\newblock In \emph{European Conference on Computer Vision}, pages 516--533. Springer, 2022.

\bibitem[Yu et~al.(2022)Yu, Serra, Ramalingam, and Zhe]{yu2022combinatorial}
Xin Yu, Thiago Serra, Srikumar Ramalingam, and Shandian Zhe.
\newblock The combinatorial brain surgeon: Pruning weights that cancel one another in neural networks.
\newblock In \emph{International Conference on Machine Learning}, pages 25668--25683. PMLR, 2022.

\bibitem[Zagoruyko and Komodakis(2017)]{zagoruyko2017paying}
Sergey Zagoruyko and Nikos Komodakis.
\newblock Paying more attention to attention: Improving the performance of convolutional neural networks via attention transfer.
\newblock In \emph{International Conference on Learning Representations (ICLR)}, 2017.

\bibitem[Zhang and Yang(2026)]{zhang2026corp}
Boxiang Zhang and Baijian Yang.
\newblock Corp: Closed-form one-shot representation-preserving structured pruning for vision transformers.
\newblock \emph{arXiv preprint arXiv:2602.05243}, 2026.

\bibitem[Zheng et~al.(2022)Zheng, Zhang, Yang, Tan, Xiao, Ren, Pu, et~al.]{zheng2022savit}
Chuanyang Zheng, Kai Zhang, Zhi Yang, Wenming Tan, Jun Xiao, Ye~Ren, Shiliang Pu, et~al.
\newblock Savit: Structure-aware vision transformer pruning via collaborative optimization.
\newblock \emph{Advances in Neural Information Processing Systems}, 35:\penalty0 9010--9023, 2022.

\bibitem[Zhu and Gupta(2017)]{zhu2017prune}
Michael Zhu and Suyog Gupta.
\newblock To prune, or not to prune: exploring the efficacy of pruning for model compression.
\newblock \emph{arXiv preprint arXiv:1710.01878}, 2017.

\end{thebibliography}
